\newcommand{\Ebb}{\mathbb{E}}
\newcommand{\Xb}{\boldsymbol{X}}
\newcommand{\Tb}{\boldsymbol{T}}
\newcommand{\ab}{\boldsymbol{a}}
\newcommand{\Zcal}{\mathcal{Z}}
\newcommand{\Fcal}{\mathcal{F}}
\newcommand{\Ocal}{\mathcal{O}}
\newcommand{\Dcal}{\mathcal{D}}
\newcommand{\Mcal}{\mathcal{M}}
\newcommand{\Acal}{\mathcal{A}}
\newcommand{\Bcal}{\mathcal{B}}
\newcommand{\Pcal}{\mathcal{P}}
\newcommand{\sbar}{\bar{s}}
\newcommand{\stilde}{\Tilde{s}}
\newcommand{\Sbar}{\bar{S}}
\newcommand{\alphatilde}{\Tilde{\alpha}}
\renewcommand{\epsilon}{\varepsilon}
\newcommand{\Zmata}{\mathcal{Z}^{(a)}}
\newcommand{\Zmat}{\mathcal{Z}}
\newcommand{\Tmat}{\mathcal{T}}
\newcommand{\Thmat}{\widehat{\mathcal{T}}}
\newcommand{\Zhmat}{\widehat{\mathcal{Z}}}
\newcommand{\Zhmata}{\widehat{\mathcal{Z}}^{(a)}}
\newcommand{\Fmat}{\mathcal{F}}
\newcommand{\Fmata}{\mathcal{F}^{(a)}}
\newcommand{\Fhmat}{\widehat{\Fcal}}
\newcommand{\Fhmata}{\widehat{\Fcal}^{(a)}}
\newcommand{\Ztrunc}{[{^k\Zcal}]}
\newcommand{\Zcomp}{[{_k^\infty\Zcal}]}
\newcommand{\alphahat}{\widehat{\alpha}}
\newcommand{\alphaopt}{\bar{\alpha}}
\newcommand{\floor}[1]{\left\lfloor #1 \right\rfloor}
\newcommand{\ceil}[1]{\left\lceil #1 \right\rceil}
\newcommand{\BlackBox}{\rule{1.5ex}{1.5ex}}  
\def\QED{~\rule[-1pt]{5pt}{5pt}\par\medskip}
\newenvironment{proof}{\par\noindent{\bf Proof\ }}{\hfill\BlackBox\\[2mm]}
\newtheorem{theorem}{Theorem}
\newtheorem{lemma}[theorem]{Lemma}
\newtheorem{proposition}[theorem]{Proposition}
\newtheorem{remark}{Remark}[theorem]
\newtheorem{corollary}[theorem]{Corollary}
\newtheorem{definition}{Definition}
\title{Inverse Reinforcement Learning in a Continuous State Space with Formal Guarantees}
\author{%
  Gregory Dexter \\
  Department of Computer Science\\
  Purdue University\\
  West Lafayette, IN, USA\\
  \texttt{gdexter@purdue.edu} \\
   \And
   Kevin Bello \\
   Department of Computer Science \\
   Purdue University\\
   West Lafayette, IN, USA\\
   \texttt{kbellome@purdue.edu} \\
   \And
   Jean Honorio \\
   Department of Computer Science \\
   Purdue University\\
   West Lafayette, IN, USA\\
   \texttt{jhonorio@purdue.edu} \\
}
\begin{document}

\maketitle

\begin{abstract}
Inverse Reinforcement Learning (IRL) is the problem of finding a reward function which describes observed/known expert behavior.  
The IRL setting is remarkably useful for automated control, in situations where the reward function is difficult to specify manually or as a means to extract agent preference. 
In this work, we provide a new IRL algorithm for the \emph{continuous} state space setting with \emph{unknown} transition dynamics by modeling the system using a basis of \emph{orthonormal functions}. 
Moreover, we provide a proof of correctness and formal guarantees on the sample and time complexity of our algorithm.  
Finally, we present synthetic experiments to corroborate our theoretical guarantees.
\end{abstract}

\section{Introduction}
Reinforcement learning in the context of a Markov Decision Process (MDP) aims to produce a policy function such that the discounted accumulated reward of an agent following the given policy is optimal.  One difficulty in applying this framework to real world problems is that for many tasks of interest, there is no obvious reward function to maximize.  For example, a car merging onto a highway must account for many different criteria, e.g., avoiding crashing and maintaining adequate speed, and there is not an obvious formula of these variables producing a reward which reflects expert behavior \cite{arora2020survey}.  The Inverse Reinforcement Learning (IRL) problem consists of having as input an MDP with no reward function along with a policy, and the goal is to find a reward function so that the MDP plus the learned reward function is optimally solved by the given policy.   Describing expert behavior in terms of a reward function has been shown to provide a succinct, robust, and interpretable representation of expert behavior for purposes of reinforcement learning \cite{arora2020survey}. Furthermore, this representation of agent preference can give insight on the preferences or goals of an agent.

Despite much work over the years, there still exist major limitations in our understanding of IRL.  One major gap is the lack of algorithms for the setting of continuous state space.  This setting is of primary importance  since many promising applications, such as autonomous vehicles and robotics, occur in a continuous setting.  Existing algorithms for IRL in the continuous setting generally do not directly solve the IRL problem, but instead take a policy matching approach which leads to a less robust description of agent preference \cite{arora2020survey}.  Finally, there is a lack of formal guarantees  for IRL methods which make principled understanding of the problem difficult.  

For a comprehensive list of results on the IRL problem, we refer the reader to \cite{arora2020survey} as we next describe works that are closest to ours. Throughout the years, various approaches have been proposed to solve the IRL problem in the continuous state setting. \citet{aghasadeghi2011maximum} used a finite set of parametric basis functions to model the policy and reward function for a policy matching approach.  \citet{boularias2011relative} matched the observed policy while maximizing Relative Entropy and applied this method to continuous tasks through discretization.  \citet{levine2012continuous} maximized the likelihood of the observed policy while representing the reward function as a Gaussian Process.  \citet{finn2016guided} modeled the reward function through a neural network to allow for more expressive rewards.  \citet{ramponi2020truly} and \citet{pirotta2016inverse} search for an optimal reward based on minimizing the gradient of the continuous policy. \citet{metelli2017compatible} selects a reward function over a set of basis functions such that the policy gradient is zero. These previous methods rely on heuristic arguments and empirical results to support each approach, but do not provide any theoretical guarantees or the insight that comes with them.  Finally, \citet{komanduru} and \citet{metelli2021provably} provided theoretical analysis similar to our work, but only for the finite state space setting.

\paragraph{Contributions.}
In contrast to the work above, this paper provides a \textit{formally guaranteed solution} to the IRL problem over a continuous state space when transition dynamics of the MDP are \textit{unknown}.  We accomplish this by representing the estimated transition dynamics as an infinite sum of basis functions from $\mathbb{R}^2 \rightarrow \mathbb{R}$.  Then, under natural conditions on the representations, we prove that a linear program can recover a correct reward function.  We develop a model of IRL in the continuous setting using the idea of infinite matrices, which in contrast to the more typical non-parametric functional analysis approach, allow us to derive \emph{``explicit, computable, and meaningful error bounds''} \cite{shivakumar2009review}. We exhibit our main theorem by proving that if the transition functions of the MDP have appropriately bounded partial derivatives, then by representing the transition functions over a trigonometric basis, our algorithm returns a correct reward function with high probability.  However, we emphasize that our results apply under a range of appropriate technical conditions and orthonormal bases.  To the best of our knowledge, we present the first algorithm for IRL in the continuous setting with formal guarantees on correctness, sample complexity, and computational complexity. The total sample and computational complexity of our algorithm depends on the series representations of the transition dynamics over a given basis.  However, if a given set of transition functions can be represented with sufficiently small error over a given basis by a $k \times k$ matrix of coefficients, then our algorithm (see Algorithm \ref{algo:irl}) returns a solution to the IRL problem with probability at least $1-\delta$ in $\Ocal(k^2 \log \frac{k}{\delta})$ samples by solving a linear program with $\Ocal(k)$ variables and constraints. We show how the magnitude of $k$ needed, i.e., the size of the representation, depends on characteristics of a given IRL problem. This sample complexity matches that of the discrete case algorithm of \cite{komanduru}.  For more detailed complexity results, see Section \ref{section:irl}. We experimentally validate our results by testing our IRL algorithm on a problem with random polynomial transition functions.  Finally, we show how our algorithm can be efficiently extended to the $d$-dimensional setting.

\section{Preliminaries}
In this section, we first define the Markov Decision Process (MDP) formalizing the IRL problem.  We briefly introduce the Bellman Optimality Equation, which gives necessary and sufficient conditions for a solution to the IRL problem.  We then describe the problem of estimating the transition dynamics of the MDP.  Finally, we show how to represent the Bellman Optimality Criteria for the given MDP in terms of infinite-matrices and infinite-vectors. For now, we restrict our attention to a one dimensional continuous state space.

\begin{definition}
A Markov Decision Process (MDP) is a 5-tuple $(S, A, \{P_a(s'|s)\}_{a\in A}, \gamma, R)$ where
\begin{itemize}
    \item $S=[-1,1]$ is an uncountable set of states\footnote{While this paper considers $S=[-1,1]$, $S$ can easily be generalize to any interval.},
    \item $A$ is a finite set of actions ,
    \item $\{P_a(s'|s)\}_{a\in A}$ is a set of probability density functions such that $P_a(s'|s): S\times S\rightarrow \mathbb{R}^+$.  $P_a(s'|s)$ represents the transition probability density function from state $s$ to state $s'$ when taking action $a$,
    \item $\gamma \in [0,1)$ is the discount factor,
    \item $R:S\rightarrow \mathbb{R}$ is the reward function.
\end{itemize}
\end{definition}

We consider the inverse reinforcement problem.  That is, given an MDP without the reward function $(\text{MDP} \setminus R)$ along with a policy $\pi:S \rightarrow A$, our goal is to determine a reward function $R$ such that the policy $\pi$ is optimal for the MDP under the learned reward function $R$.

\textbf{Bellman Optimality Criteria.}  
First we define the value function and Q-function
\begin{align}
    V^\pi(s_0) &=  \Ebb[R(s_0) + \gamma R(s_1)+\gamma^2 R(s_2)+...|\pi], \\
    Q^{\pi}(s,a) &= R(s) + \gamma \Ebb[ V^\pi(s')],
    ~~~~s'\sim P_a(s'|s).
\end{align}
Using these terms, we present the \textit{Bellman Optimality Equation} which gives necessary and sufficient conditions for policy $\pi$ to be optimal for a given MDP \cite{sutton2018reinforcement}.
\begin{equation}\label{bellman_criteria}
    \pi(s) \in \arg\max_{a\in A} Q^\pi(s,a),~~\forall~s\in S.
\end{equation}
\textbf{Estimation Problem.} We impose the additional task of estimating the transition dynamics of the MDP.  While the policy is assumed to be exactly known, the transition probability density functions $\{P_a(s'|s)\}_{a\in A}$ are unknown.  Instead, we assume that for an arbitrary state $s$, we can sample the one-step transition $s' \sim P_a(\cdot| s)$, as previously done by \citet{komanduru}.

\textbf{Infinite Matrix Representation.}
We now introduce a natural assumption on the transition probability density functions, which will make further analysis possible.

Let $\{\phi_n(s)\}_{n\in\mathbb{N}}$ be a countably infinite set of orthonormal basis functions over $S$.  That is 
\begin{equation*}
    \int_{-1}^1 \phi^2_n(s) ds = 1,
    ~ \text{and} ~
    \int_{-1}^1 \phi_i(s) \phi_j(s) ds = 0, ~\forall~i\neq j.
\end{equation*}
Examples of bases that fulfill the above conditions are the Legendre polynomials, the Chebyshev polynomials, and an appropriately scaled trigonometric basis. 

We assume that the transition probability density functions $\{P_a(s'|s)\}_{a\in A}$ can be represented over the given basis by,
\begin{equation} \label{eq:transition_series}
    P_a(s'|s) = \sum_{i,j=1}^\infty \phi_i(s') \Zmata_{ij} \phi_j(s), ~~\Zmata_{ij} \in \mathbb{R}.
\end{equation}
In Section \ref{sec:fourier}, we demonstrate sufficient conditions on the transition functions to guarantee that our algorithm is correct when using a trigonometric basis as an example. The following proposition justifies considering only reward functions which are a linear combination of $\{\phi_n(s)\}_{n \in \mathbb{N}}$.

\begin{proposition} \label{prop:reward_existence}
If the transition functions can be represented over $\{\phi_n(s)\}_{n\in \mathbb{N}}$ as in Equation \ref{eq:transition_series} and there is a at least one reward function such that the policy is optimal, then there is at least one reward function $R$ equal to a linear combination of $\{\phi_n(s)\}_{n\in \mathbb{N}}$ such that the Bellman Optimally criterion is fulfilled for the given policy.
\end{proposition}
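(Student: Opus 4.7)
The plan is: given any reward function $R^*$ under which $\pi$ is optimal, define $R$ to be the $L^2$ projection of $R^*$ onto the closed linear span of $\{\phi_n\}_{n\in\mathbb{N}}$ in $L^2([-1,1])$, write $\Delta := R^* - R$, and verify that $\pi$ remains optimal for $R$. By construction $R$ is a linear combination of the $\phi_n$ and $\Delta$ is orthogonal in $L^2$ to every $\phi_n$, so all that remains is to show that subtracting $\Delta$ from $R^*$ preserves the Bellman optimality criterion.

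The key algebraic step is that the transition-expectation operator $T_a f(s) := \int_{-1}^1 P_a(s'|s) f(s')\,ds'$ annihilates $\Delta$: plugging in Equation~(\ref{eq:transition_series}) and interchanging sum and integral,
\begin{equation*}
    T_a\Delta(s) = \sum_{i,j} \Zmata_{ij} \phi_j(s) \int_{-1}^1 \phi_i(s')\Delta(s')\,ds' = 0,
\end{equation*}
since $\langle\phi_i,\Delta\rangle = 0$ for every $i$, and the same reasoning applies to the policy operator $T_\pi f(s) := \int P_{\pi(s)}(s'|s) f(s')\,ds'$. Hence $T_\pi^k\Delta \equiv 0$ for every $k\geq 1$, and solving the Bellman equations as Neumann series $V^{\pi,R^*} = \sum_{k\geq 0}\gamma^k T_\pi^k R^*$ and $V^{\pi,R} = \sum_{k\geq 0}\gamma^k T_\pi^k R$ and subtracting yields $V^{\pi,R^*} - V^{\pi,R} = \Delta$. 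Substituting into the Bellman optimality criterion~(\ref{bellman_criteria}) for $R$, using that $R(s)$ does not depend on $a$ and that $T_a\Delta(s)=0$ for every $a$, gives
\begin{equation*}
    \arg\max_a\bigl[R(s) + \gamma T_a V^{\pi,R}(s)\bigr] = \arg\max_a T_a V^{\pi,R^*}(s) = \arg\max_a Q^{\pi,R^*}(s,a),
\end{equation*}
which contains $\pi(s)$ by hypothesis; so $\pi$ is optimal under $R$, as required.

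The main technical obstacle is rigorously justifying the interchange of summation and integration in the displayed computation of $T_a\Delta$ and ensuring the projection is well-defined. A clean sufficient setting is $R^*\in L^2([-1,1])$ together with $P_a(\cdot\mid s)\in L^2$ and the series in Equation~(\ref{eq:transition_series}) converging in $L^2$ in $s'$ for each fixed $s$, so that the inner product $\langle P_a(\cdot\mid s),\Delta\rangle$ equals the term-wise sum $\sum_{i,j}\Zmata_{ij}\phi_j(s)\langle\phi_i,\Delta\rangle$ by continuity of the inner product. Under these mild conditions, which are natural companions to the infinite-matrix representation already assumed, the algebra above is fully rigorous.
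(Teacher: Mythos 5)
Your proposal is correct and is essentially the paper's own argument: both project the given reward onto the span of $\{\phi_n\}$ and exploit that the transition densities lie in that span, so the orthogonal residual $\Delta$ is invisible to the dynamics and the action-dependent parts of the Q-functions coincide exactly, while the action-independent term $R(s)$ cancels in the argmax. The only cosmetic difference is packaging: the paper proves by induction (Lemma \ref{lemma:supp}) that every $r$-step transition density is representable over the basis and matches expectations term by term, whereas you phrase the same fact as one-step annihilation $T_a\Delta=0$ propagated through the Neumann series for $V^\pi$ --- with the interchange of sum and integral justified just as the paper does via convergence of the series representation.
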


(Missing proofs can be found in the Appendix \ref{app:proofs}.)

This justifies considering only reward functions represented by a series such as,
\begin{equation}\label{eq:reward_series}
    R(s) = \sum_{k=1}^\infty \alpha_k \phi_k(s), ~~~\alpha_k \in \mathbb{R}.
\end{equation}
The above conditions are equivalent to requiring that the transition function is a linear operator on the function space spanned by the Schauder basis $\{\phi_n(s)\}$. Proposition \ref{prop:reward_existence} guarantees that if there exists a reward which optimally solves the IRL problem, then there is an optimal reward in the closure of the function space spanned by $\{\phi_n(s)\}_{n\in \mathbb{N}}$.

It will be useful to identify the coefficients $\Zmata_{ij}$ and $\alpha_k$ in Equations \ref{eq:transition_series} and \ref{eq:reward_series} as entries of an infinite-matrix and infinite-vector respectively.  This identification allows us to keep all the usual definitions of matrix-matrix multiplication, matrix-vector multiplication, matrix transposition, matrix powers, as well as addition and subtraction.  

We define additional simplifying notation.  Let the infinite matrix $\Zmata$ be defined as the matrix of coefficients associated with $P_a(s'|s)$.  Let the infinite vector $\alpha$ correspond to coefficients of an arbitrary reward function $R$.  We drop the superscript $a$ and write $\Zmat$ where possible.  Let $\Ztrunc$ represent the $k$-truncation of $\Zmat$.  That is, $\Ztrunc_{ij} = \Zmat_{ij}$ for all $i,j \leq k$, and $\Ztrunc_{ij} = 0$ otherwise.  Define the complement of the $k$-truncation of $\Zmat$ as $\Zcomp = \Zmat - \Ztrunc$.  Let $\phi(s)$ be an infinite-vector where the $n$-th entry equals $\phi_n(s)$.  Note that in contrast to finite matrices, typical objects, such as matrix products, do not always exist in the infinite matrix case. We add additional conditions on the infinite matrices where necessary to ensure that all needed operations are well-defined.  This typically amounts to requiring absolute summability of the reward coefficients $\alpha$ and bounded row absolute-summability of the matrix $\Zmata$.

We define operations on infinite matrices and vectors to follow typical conventions. Let $\Acal$ and $\Bcal$ be infinite matrices and $\nu$ be an infinite vector.  Then let $\|\Acal\|_\infty$ denote the induced infinity norm defined as $\|\Acal\|_\infty = \max_{i \in \mathbb{N}} \sum_{j=1}^\infty |\Acal_{ij}|$. Let the vector $\ell_1$-norm be defined as $\|\nu\|_1 = \sum_{i=1}^\infty |\nu_i|$ and the $\ell_\infty$-norm defined as $\|\nu\|_\infty = \max_{i\in \mathbb{N}}|\nu_i|$. Let $[\Acal + \Bcal]_{ij} = \Acal_{ij} + \Bcal_{ij}$.  Let the infinite matrix product be defined as $[\Acal \Bcal]_{ik} = \sum_{j=1}^\infty \Acal_{ij} \Bcal_{jk}$. Matrix-vector multiplication is defined as $[\Acal \nu]_i = \sum_{j=1}^\infty \Acal_{ij} \nu_j$ and $[\nu^\top \Acal]_j = \sum_{i=1}^\infty \nu_i \Acal_{ij}$.  Matrix transposition is defined as $[\Acal^\top]_{ij} = \Acal_{ji}$. Define matrix power as the repeated product of a matrix, i.e., $\Acal^r = \Acal \Acal...\Acal$, $r$ times. The outer product of infinite vectors $\nu$ and $\mu$ is denoted as $\nu \mu^\top$ and equals an infinite matrix such that $[\nu \mu^\top]_{ij} = \nu_i \mu_j$. These definitions match those of finite matrices and vectors except that the index set is countably infinite. For more information on infinite matrices, we refer the reader to \cite{cooke1950infinite}.

Finally, we can give an infinite matrix analogue to the equations of \citet{ng2000algorithms} to characterize the optimal solutions.  We restrict our definition to only strictly optimal policies as motivated by \citet{komanduru}.  For simplicity, we also assign the given optimal policy $\pi$ as $\pi(s)\equiv a_1$, $\forall s \in S$, as done in \cite{ng2000algorithms} and \cite{komanduru}. Following the same intuition as in the discrete case, we can represent the expectation of a reward function after a sequence of transitions through matrix-matrix and matrix-vector multiplication. Therefore, we can use the following definition to write the difference in Q-functions of the optimal action and any other action under the optimal policy:
\begin{equation}\label{eq:f_matrix}
    \Fmata =\sum_{r=0}^\infty \gamma^r (\Zmat^{(a_1)})^r  (\Zmat^{(a_1)}-\Zmata).
\end{equation}
\paragraph{Necessary and sufficient conditions for policy optimality.} The following Lemma gives necessary and sufficient conditions for policy $\pi$ to be strictly optimal for a given MDP of the form described above.  The condition is analogous to the condition in \cite{ng2000algorithms}.  However, a key difference is that entries of the infinite transition matrices represent coefficients of basis functions and not transition probabilities directly.  Therefore, our method of proof is necessarily different.

\begin{samepage}
\begin{lemma} \label{lemma:bellman_criteria}
The given policy $\pi(s)$ is strictly optimal for the $\operatorname{MDP}(S, A, \{P_a(s'|s)\}_{a\in A}, \gamma, R = \sum_{n=1}^\infty \alpha_n \phi_n(s))$ if and only if,
\begin{equation*}
    \alpha^\top \Fmata \phi(s) > 0,  
    ~~~\forall s\in S, ~~\forall ~a\in A \setminus \{a_1\}.
\end{equation*}
\end{lemma}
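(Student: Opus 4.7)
The plan is to express the difference $Q^\pi(s,a_1)-Q^\pi(s,a)$ in closed form as a bilinear expression in $\alpha$ and $\phi(s)$ involving $\Fmata$, and then invoke the Bellman Optimality Equation \eqref{bellman_criteria} to get strict optimality iff this difference is strictly positive for every $s$ and every $a\neq a_1$.

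First I would establish the one-step expected reward formula. Expanding $R(s')=\sum_k \alpha_k\phi_k(s')$ and $P_a(s'|s)=\sum_{i,j}\phi_i(s')\Zmata_{ij}\phi_j(s)$, and invoking the orthonormality $\int_{-1}^1 \phi_i(s')\phi_k(s')\,ds'=\delta_{ik}$, gives
\begin{equation*}
\Ebb_{s'\sim P_a(\cdot|s)}[R(s')] \;=\; \sum_{i,j}\alpha_i\Zmata_{ij}\phi_j(s) \;=\; \alpha^\top\Zmata\phi(s).
\end{equation*}
Here the summability hypotheses on $\alpha$ and on the rows of $\Zmata$ are precisely what allow the interchange of the double sum with the integral (Fubini/Tonelli), so the product $\alpha^\top\Zmata\phi(s)$ is well-defined as an absolutely convergent series.

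Next, by induction on $r$, combined with the tower property $\Ebb[R(s_r)\mid s_0=s]=\Ebb_{s_1}[\Ebb[R(s_r)\mid s_1]\mid s_0=s]$ and applying the one-step identity at each level with the coefficient vector updated to $(\Zmat^{(a_1)})^\top\cdots(\Zmat^{(a_1)})^\top \alpha$, I get
\begin{equation*}
\Ebb[R(s_r)\mid s_0=s,\pi] \;=\; \alpha^\top (\Zmat^{(a_1)})^r \phi(s).
\end{equation*}
Summing the discounted series, $V^\pi(s)=\alpha^\top\bigl(\sum_{r=0}^{\infty}\gamma^r(\Zmat^{(a_1)})^r\bigr)\phi(s)$; the geometric factor $\gamma^r$ together with the bounded induced $\infty$-norm of $\Zmat^{(a_1)}$ ensures convergence of the resulting infinite-matrix Neumann series.

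Applying the one-step identity once more to compute $\Ebb_{s'\sim P_a(\cdot|s)}[V^\pi(s')]$ and plugging into $Q^\pi(s,a)=R(s)+\gamma\Ebb_{s'\sim P_a}[V^\pi(s')]$ yields
\begin{equation*}
Q^\pi(s,a_1)-Q^\pi(s,a) \;=\; \gamma\,\alpha^\top\!\left(\sum_{r=0}^{\infty}\gamma^r(\Zmat^{(a_1)})^r\right)\!\bigl(\Zmat^{(a_1)}-\Zmata\bigr)\phi(s) \;=\; \gamma\,\alpha^\top\Fmata\phi(s).
\end{equation*}
The $R(s)$ terms cancel, and the factor $\gamma>0$ is a positive constant that does not affect signs. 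By \eqref{bellman_criteria} together with the definition of strict optimality (see \citet{komanduru}), $\pi$ is strictly optimal iff $Q^\pi(s,a_1)>Q^\pi(s,a)$ for every $a\in A\setminus\{a_1\}$ and every $s\in S$, which by the above identity is equivalent to $\alpha^\top\Fmata\phi(s)>0$ for all such $s$ and $a$.

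The main obstacle is purely analytic rather than conceptual: making sure that every infinite sum we rearrange is absolutely convergent, so that Fubini justifies swapping integrals against $P_a(s'|s)$ with the double series defining the transition density and the reward, and that the Neumann-type series $\sum_{r}\gamma^r(\Zmat^{(a_1)})^r$ converges entrywise and commutes with $\alpha^\top(\cdot)\phi(s)$. These are exactly the absolute-summability / bounded-row-sum conditions the authors attached to $\alpha$ and $\Zmata$ in the paragraph immediately preceding the lemma, so invoking them at each interchange closes the argument.
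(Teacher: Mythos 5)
Your proposal is correct and takes essentially the same route as the paper: represent $V^\pi$ and $Q^\pi$ as Neumann-type series in $\Zmat^{(a_1)}$ applied to $\alpha$ and $\phi(s)$, cancel the $R(s)$ terms, and reduce strict Bellman optimality to $\alpha^\top \Fmata \phi(s) > 0$ for all $s$ and $a \neq a_1$. The only cosmetic difference is that the paper first proves (its Lemma \ref{lemma:supp}, by the same induction and the same orthonormality/Fubini justifications) that the $r$-step transition density equals $\phi(s_r)^\top (\Zmat^{(a_1)})^r \phi(s_0)$ and then integrates $R$ against it, whereas you induct directly on $\Ebb[R(s_r)\mid s_0,\pi]$ via the tower property; your explicit note that the difference equals $\gamma\,\alpha^\top\Fmata\phi(s)$ with $\gamma>0$ not affecting signs matches the paper's implicit division by $\gamma$ in its chain of equivalences.
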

\end{samepage}
Lemma \ref{lemma:bellman_criteria} will be used in Section \ref{section:irl} to prove correctness of our main result, Algorithm \ref{algo:irl}, as it provides a relatively easy to verify sufficient condition on the reward function representation.

\section{Estimating MDP transition dynamics}\label{section:estimate}

As stated in the previous section, we consider the problem where the true transition dynamics of the MDP are unknown.  Thus, in this section, we propose an algorithm to estimate the coefficient matrix $\Zmata$ and derive high-probability error bounds on the estimated matrices $\Fhmata$.  We denote any estimated matrices with a hat, i.e., $\Zhmata$ is the estimation of $\Zmata$ and $\Fhmata$ is derived from $\Zhmata$.

\begin{algorithm}[!ht]
	\caption{\texttt{EstimateZ}}\label{algo:estimate_Z}
	\begin{algorithmic}
        \STATE \textbf{Input:}
		Action $a \in A$, iteration count $n\in \mathbb{N}$, truncation parameter $k\in \mathbb{N}$.
		\vspace{1mm}
		\STATE 1. Sample $\sbar \in \mathbb{R}^n$ such that each $\sbar_r$ is independent and  $\sbar_r\sim \operatorname{Uniform}(-1,1)$.
		\vspace{1mm}
		\STATE 3. Sample $s' \in \mathbb{R}^n$ such that each $s'_r$ is independent and $s'_r \sim P_a(\cdot | \sbar_r)$.
		\vspace{1mm}
		\STATE 4. Compute $\Zhmata = \frac{2}{n}\sum_{r=1}^n \phi(s'_r)\phi(\sbar_r)^\top$.
		\vspace{1mm}
		\STATE \textbf{Output:} return $[{^k\Zhmata}]$.
	\end{algorithmic}
\end{algorithm}
Since Algorithm \ref{algo:estimate_Z} is based on random samples, in our next theorem we show that, with probability at least $1-\delta$, Algorithm \ref{algo:estimate_Z} returns an estimate of the true matrix $\Zmata$ with bounded error measured by the induced infinity norm.

\begin{theorem} \label{thm:estimate}
Let $\Zhmata$ be the output of Algorithm \ref{algo:estimate_Z} for inputs $(a, n, k)$, $\epsilon > 0$, and $\delta \in (0,1)$.
If $$ \|\Zcomp^{(a)}\|_\infty \leq \epsilon
    \quad \text{and} \quad 
    n \geq \frac{8k^2}{\epsilon^2} \log\frac{2k^2}{\delta},$$
then $\|\Zhmata - \Zmata\|_\infty \leq 2\epsilon$ with probability at least  $1-\delta$.
\end{theorem}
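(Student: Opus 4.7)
The plan is to decompose the error via the triangle inequality into a statistical (sampling) error and a deterministic (truncation) error. Writing $\Zhmata$ for the truncated output $[{^k\Zhmata}]$, I would use
\begin{equation*}
\|\Zhmata - \Zmata\|_\infty \;\le\; \|[{^k\Zhmata}] - [{^k\Zmata}]\|_\infty + \|[{^k\Zmata}] - \Zmata\|_\infty
\;=\; \|[{^k\Zhmata}] - [{^k\Zmata}]\|_\infty + \|\Zcomp^{(a)}\|_\infty,
\end{equation*}
so the second term is already at most $\epsilon$ by hypothesis. It remains to show the first term is at most $\epsilon$ with probability at least $1-\delta$.

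First I would verify that the pre-truncation estimator is entry-wise unbiased. Fix $i,j \leq k$. Conditioning on $\sbar_r$ and using the series expansion of $P_a$ together with orthonormality of $\{\phi_n\}$ on $[-1,1]$ gives
\begin{equation*}
\Ebb[\phi_i(s'_r)\mid \sbar_r] \;=\; \int_{-1}^1 \phi_i(s')\sum_{l,m}\phi_l(s')\Zmata_{lm}\phi_m(\sbar_r)\,ds' \;=\; \sum_{m}\Zmata_{im}\phi_m(\sbar_r).
\end{equation*}
Taking expectation against the uniform density $\tfrac{1}{2}$ on $[-1,1]$ and invoking orthonormality once more yields $\Ebb[\phi_i(s'_r)\phi_j(\sbar_r)] = \tfrac{1}{2}\Zmata_{ij}$, so the factor $2/n$ in Algorithm~\ref{algo:estimate_Z} makes $[{^k\Zhmata}]_{ij}$ unbiased for $\Zmata_{ij} = [{^k\Zmata}]_{ij}$.

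Next I would concentrate each of the $k^2$ entries by Hoeffding's inequality. Assuming the basis is uniformly bounded by $1$ (true, e.g., for the trigonometric basis in Section~\ref{sec:fourier}), each summand $2\phi_i(s'_r)\phi_j(\sbar_r)$ lies in $[-2,2]$, so for any $t>0$,
\begin{equation*}
\Pr\!\left[\,|[{^k\Zhmata}]_{ij} - \Zmata_{ij}|\ge t\,\right] \;\le\; 2\exp\!\left(-\tfrac{nt^2}{8}\right).
\end{equation*}
Choosing $t=\epsilon/k$ and applying a union bound over the $k^2$ entries with $i,j\le k$ gives a failure probability of at most $2k^2\exp(-n\epsilon^2/(8k^2))$, which is at most $\delta$ precisely when $n\ge \tfrac{8k^2}{\epsilon^2}\log\tfrac{2k^2}{\delta}$, matching the stated sample complexity.

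On the resulting good event, each of the first $k$ rows of $[{^k\Zhmata}]-[{^k\Zmata}]$ has at most $k$ nonzero entries, each of magnitude at most $\epsilon/k$, so its $\ell_1$-norm is at most $\epsilon$; rows beyond $k$ are identically zero. Hence $\|[{^k\Zhmata}]-[{^k\Zmata}]\|_\infty \le \epsilon$, and the decomposition above delivers $\|\Zhmata - \Zmata\|_\infty \le 2\epsilon$. The main subtlety is the interaction between the operator norm (a row sum) and the need for entrywise concentration: controlling a sum of $k$ potentially correlated terms by $\epsilon$ forces entrywise precision $\epsilon/k$, which is exactly what produces the $k^2$ factor in the sample complexity and cannot be avoided by a naive matrix-Bernstein style argument unless one introduces structural assumptions on $\Zmata$ beyond bounded row-sum.
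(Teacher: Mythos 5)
Your proof is correct and follows essentially the same route as the paper's: unbiasedness of the estimator via orthonormality of $\{\phi_n\}$, entrywise Hoeffding concentration at precision $\epsilon/k$, a union bound over the $k^2$ truncated entries, conversion of entrywise error to the induced infinity norm via row sums, and a final triangle inequality against the truncation error $\|\Zcomp^{(a)}\|_\infty$. Your explicit flagging of the uniform boundedness assumption $|\phi_n(s)|\le 1$ (which the paper uses implicitly, and which holds for its trigonometric basis but not, e.g., for normalized Legendre polynomials) is a minor but welcome refinement.
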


Theorem \ref{thm:estimate} guarantees an $\epsilon$-approximation of $\Zmata$ with probability at least $1-\delta$ in $n\in\Ocal(\frac{k^2}{\epsilon^2} \log \frac{k}{\delta})$ samples. The condition that the truncation error is less than $\epsilon$ can be guaranteed by properties of the \textit{true} MDP, as we will demonstrate in Section \ref{sec:fourier}. Next, we show how the error in estimating $\Zmata$ propagates to the error in the estimate of infinite matrix $\Fmata$. 

\begin{lemma} \label{lemma:calculate_F} 
If for all $a \in A$, $\|\Zhmata - \Zmata\|_\infty \leq \epsilon$ and $\|\Zmata\|_\infty, \|\Zhmata\|_\infty \leq \Delta < \frac{1}{\gamma},$ then for all $a \in A \setminus \{a_1\}$, 
\[
    \|\Fhmata - \Fmata\|_\infty \leq \frac{2\epsilon}{(1-\gamma\Delta)^2}.
\]
\end{lemma}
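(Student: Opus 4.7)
The plan is to rewrite both $\Fmata$ and $\Fhmata$ in closed form using the Neumann series identity $\sum_{r=0}^\infty \gamma^r \Acal^r = (I-\gamma\Acal)^{-1}$. Since by hypothesis $\|\gamma \Zmat^{(a_1)}\|_\infty \le \gamma\Delta < 1$, the series defining $\Fmata$ in Equation \ref{eq:f_matrix} converges absolutely in the induced infinity norm and we may write $\Fmata = M\,(\Zmat^{(a_1)} - \Zmata)$ with $M := (I-\gamma \Zmat^{(a_1)})^{-1}$. The same holds for $\Fhmata = \hat M\,(\Zhmat^{(a_1)} - \Zhmata)$ with $\hat M := (I-\gamma \Zhmat^{(a_1)})^{-1}$, since the second hypothesis gives $\|\gamma \Zhmat^{(a_1)}\|_\infty \le \gamma\Delta < 1$. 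In particular, $\|M\|_\infty, \|\hat M\|_\infty \le \tfrac{1}{1-\gamma\Delta}$.

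Next, I would apply the algebraic identity
\[
\hat M\hat D - MD = \hat M(\hat D - D) + (\hat M - M)D,
\]
with $D := \Zmat^{(a_1)} - \Zmata$ and $\hat D := \Zhmat^{(a_1)} - \Zhmata$. The triangle inequality yields $\|\hat D - D\|_\infty \le 2\epsilon$ and $\|D\|_\infty \le 2\Delta$. For the resolvent difference I would invoke the identity $\hat M - M = \gamma\,\hat M(\Zhmat^{(a_1)} - \Zmat^{(a_1)})M$, which follows from $M^{-1} - \hat M^{-1} = \gamma(\Zhmat^{(a_1)} - \Zmat^{(a_1)})$ and left/right multiplication by $\hat M$ and $M$. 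Submultiplicativity of $\|\cdot\|_\infty$ then gives $\|\hat M - M\|_\infty \le \tfrac{\gamma\epsilon}{(1-\gamma\Delta)^2}$.

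Combining, the induced infinity norm is bounded by
\[
\|\Fhmata - \Fmata\|_\infty \;\le\; \frac{1}{1-\gamma\Delta}\cdot 2\epsilon \;+\; \frac{\gamma\epsilon}{(1-\gamma\Delta)^2}\cdot 2\Delta \;=\; \frac{2\epsilon(1-\gamma\Delta) + 2\gamma\Delta\epsilon}{(1-\gamma\Delta)^2} \;=\; \frac{2\epsilon}{(1-\gamma\Delta)^2},
\]
which is the desired bound.

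The main obstacle is not the algebra itself but justifying these manipulations in the infinite-matrix setting: associativity of the triple product $\hat M(\Zhmat^{(a_1)}-\Zmat^{(a_1)})M$, convergence and term-by-term equality of the Neumann series, and submultiplicativity of the induced infinity norm. All of these are standard once one checks that the relevant row sums are absolutely convergent, which is guaranteed by the uniform bound $\|\cdot\|_\infty \le \Delta < 1/\gamma$. Given this observation, the infinite-matrix calculus here behaves exactly like the finite case, and each step in the derivation above carries through verbatim.
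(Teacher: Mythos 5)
Your proof is correct, but it takes a genuinely different route from the paper's. The paper never forms an inverse: it works directly with the defining series $\Fmata = \sum_{r=0}^\infty \gamma^r \Tmat^r(\Tmat - \Zmata)$, first establishing by a recurrence the power-difference bound $\|\Thmat^{r+1} - \Tmat^{r+1}\|_\infty \leq (r+1)\epsilon\Delta^r$, then bounding the series term by term and summing the resulting arithmetico-geometric series to obtain exactly $\frac{2\epsilon}{(1-\gamma\Delta)^2}$. You instead sum the Neumann series in closed form, write $\Fmata = M(\Tmat - \Zmata)$ with $M = (I - \gamma\Tmat)^{-1}$, and split the difference via $\hat M\hat D - MD = \hat M(\hat D - D) + (\hat M - M)D$ together with the resolvent identity $\hat M - M = \gamma \hat M(\Thmat - \Tmat)M$; your constants ($\|\hat D - D\|_\infty \leq 2\epsilon$, $\|D\|_\infty \leq 2\Delta$, $\|\hat M - M\|_\infty \leq \gamma\epsilon/(1-\gamma\Delta)^2$) combine to give the identical bound, and the algebra checks out since $2\epsilon(1-\gamma\Delta) + 2\gamma\Delta\epsilon = 2\epsilon$. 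Your approach is shorter and exposes the mechanism as a standard resolvent-perturbation bound; what the paper's approach buys is that it stays entirely at the level of partial sums and powers, needing only submultiplicativity and absolute convergence of row sums, so it never has to assert that $(I - \gamma\Tmat)^{-1}$ exists as an infinite matrix. That said, your flagged obstacle is handled correctly: under the hypothesis $\|\cdot\|_\infty \leq \Delta < 1/\gamma$, the infinite matrices of finite induced infinity norm form a Banach algebra (products well-defined, submultiplicative, triple products associative by Fubini for absolutely convergent double series), so the Neumann series converges in norm to a genuine two-sided inverse and every step you list carries through — indeed the paper's own proof implicitly relies on the same facts when it writes $\|\Thmat^r\|_\infty \leq \|\Thmat\|_\infty^r$ and interchanges norms with infinite sums.
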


Lemma \ref{lemma:calculate_F} matches the intuition that the error of the $\{\Fmata\}$ matrices is larger than that of the $\{\Zmata\}$ matrices due to the infinite summation (see Equation \ref{eq:f_matrix}).  The propagation of error will be smaller if there is a lower discount factor or the entries of the matrices $\{\Zmata\}$ are smaller.

\section{The IRL algorithm} \label{section:irl}

In this section, we present our main algorithm that recovers a strictly optimal reward function with high probability under general  conditions on the infinite-matrix representations. Particularly, in Theorem \ref{thm:correctness} we give the precise assumptions on the series representation of the true MDP to guarantee the linear program in Algorithm \ref{algo:irl} recovers an optimal reward function with high probability when using the estimated $\Fhmata$ matrices described in Section \ref{section:estimate}.
\begin{algorithm}[!ht] 
	\caption{\texttt{ContinuousIRL}}
	\begin{algorithmic} \label{algo:irl}
		\STATE \textbf{Input:}
		Discount factor $\gamma\in (0,1)$, interval cover parameter $c > 0$, number of samples $n \in \mathbb{N}$, truncation parameter $k \in \mathbb{N}$.
		\vspace{1mm}
		\STATE 1. Compute $\{\Zhmata\}$ by calling Algorithm \ref{algo:estimate_Z} with parameters $(a, n, k)$ for each $a \in A$.
		\vspace{1mm}
		
		\STATE 2. Compute $\{\Fhmat^{(a)}\}$, for each $a \neq a_1$ by using the $\{\Zhmata\}$ in step 1 and Equation \ref{eq:f_matrix}. 
		\vspace{1mm}
		
		\STATE 3. Compute a set $\Sbar \subset [-1,1]$ of $\ceil{2/c}$ elements such that for all $s\in [-1,1],$ there exists $\sbar \in \Sbar$ such that $|s-\sbar| \leq c$.\footnotemark
		
		\STATE 4. Solve the following linear program.  Denote this solution by $\alphahat$.
        \begin{flalign*}
        &\min_\alpha  \|\alpha\|_1 \
        \operatorname{s.t. } \alpha^\top \Fhmat^{(a)}\phi(\Bar{s}) \geq 1,
        \forall \Bar{s}\in \Bar{S}, a\in A \backslash \{a_1\}.
        \end{flalign*}
        
		\STATE \textbf{Output:} return reward function  $\widehat{R}(s)=\sum_{i=1}^k\alphahat_i\phi_i(s)$.
	\end{algorithmic}
\end{algorithm}

\footnotetext{Let $\ceil{\cdot}$ denote the ceiling function.}

\newpage

We now provide an intuitive explanation of Algorithm \ref{algo:irl}.  For a fixed reward vector $\alpha$, the term $\alpha^\top \Fmat \phi(s)$ can be regarded as a function from $S \rightarrow \mathbb{R}$.  This function has a series representation over the basis functions $\{\phi_n(s)\}_{n\in\mathbb{N}}$ with coefficients given by the infinite-vector $\Fmat^\top \alpha$.  Therefore, the Bellman Optimality criteria can be fulfilled if $\alpha$ is chosen so that the coefficients $\Fmat^\top\alpha$ make $\alpha^\top \Fmat \phi(s)$ positive over $S$.  However, bounds for the minimum of a function given its series representation are generally not tight enough and tighter approximations require expensive calculations such as solving a semi-definite program \cite{mclean2002spectral}. Instead, we exactly compute $\Fhmat \phi(\sbar)$ for all points in a finite covering set $\Sbar$. Using Lipschitz continuity of $\alpha^\top \Fhmat \phi(s)$, we can lower-bound $\alpha^\top \Fhmat \phi(s)$ over $S$. Since $\alpha^\top\Fhmat \phi(\sbar)$ is linear in $\alpha$, linear optimization can be used to ensure the lower bound is positive over $S$.

\begin{figure}[!ht]
\centering
\includegraphics[width=0.5\linewidth]{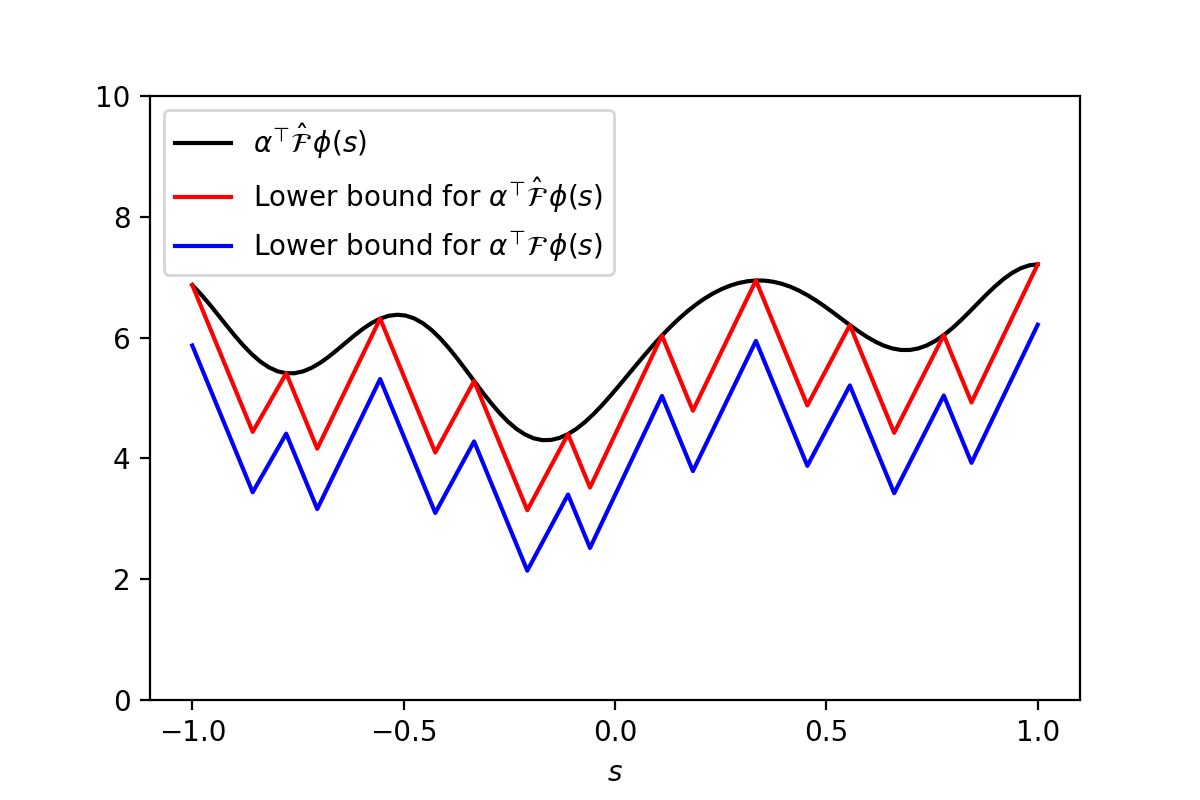}
  \caption{Visualization of Algorithm \ref{algo:irl}. Here the black line represents $\alpha^\top \Fhmat \phi(s)$ and the red line is a lower bound for this function by using the Lipschitz assumption.  The blue line is then a lower bound for the true Bellman criteria $\alpha^\top \Fmat \phi(s)$ where the distance between the red and blue scales with $\|\Fmat - \Fhmat\|_\infty$. Algorithm \ref{algo:irl} finds a sparse $\alpha$ such that $\alpha^\top \Fmata \phi(s)$ is positive over $S$.}
  \label{fig:intuition}
\end{figure}

We extend the definition of $\beta$-separability given by \citet{komanduru} to the continuous case in order to provide finite sample guarantees for Algorithm \ref{algo:irl}. Our experiments indicate that $\beta$ is effective in measuring the intrinsic difficulty of a given IRL problem.
\begin{definition}
($\beta$-separability). We say that the IRL problem is $\beta$-separable if there exists a reward vector $\alphaopt$ such that,
\begin{gather*}
    \|\alphaopt\|_1=1, \text{ and } \\
    \alphaopt^\top \Fmata\phi(s) \geq \beta > 0, ~\forall~s\in S,~a \in A\setminus \{a_1\}.
\end{gather*}
\end{definition}
The condition of $\beta$-separability does not reduce the applicability of the algorithm since every IRL problem with a strictly optimal solution $\alphaopt$ is $\beta$-separable for some $\beta>0$.  Furthermore, $\|\alphaopt\|_1$ is guaranteed to be well-defined, since if there exists $\alpha$ such that $\alpha^\top \Fmata\phi(s) \geq \beta > 0$, then there exists a finite truncation of $\alpha$ called $\alphaopt$ such that $\alphaopt^\top \Fmata\phi(s) \geq \beta' > 0$.

Next, we provide conditions on the smoothness of the expected optimal reward, estimation error of $\Fmata$, and the size of the covering set $\Sbar$ so that the minimization problem in Algorithm \ref{algo:irl} outputs a correct solution.

\begin{samepage}
\begin{lemma} \label{lemma:correctness_condition}
If the IRL problem is $\beta$-separable, $\|\Fmata \phi'(s)\|_\infty \leq \rho$,\footnote{Let $\phi_n'(s) = \frac{d}{ds} \phi_n(s)$.}  $\|\Fhmata - \Fmata\|_\infty \leq \frac{\beta - c\rho}{2}$ for all $s\in S$, $a \in A \setminus \{a_1\}$, and $\Sbar \subset S$ such that there exists $\sbar \in \Sbar$ so $|\sbar - s| < c$ for all $s\in S$,
then the solution (called $\alphahat$) to the optimization problem
\begin{flalign*}
    &\operatorname{minimize}_\alpha ~ \|\alpha\|_1 \\
    &\operatorname{s.t.}~~ \alpha^\top \Fhmat^{(a)}\phi(\sbar) \geq 1
\end{flalign*}
satisfies,
\begin{equation*}
    \alphahat^\top \Fmat \phi(s) > 0,
    ~~~\forall ~s\in S.
\end{equation*}
\end{lemma}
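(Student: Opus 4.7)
The plan is to combine a feasibility argument that yields an $\ell_1$ bound on $\alphahat$ with a three-term error decomposition that transfers the constraint satisfaction at cover points $\sbar$ to a strict positivity statement at arbitrary $s \in S$.

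First I would verify LP feasibility using $\beta$-separability. Take the witness $\alphaopt$ with $\|\alphaopt\|_1 = 1$ and $\alphaopt^\top \Fmata \phi(s) \geq \beta$. Decomposing
\[
\alphaopt^\top \Fhmata \phi(\sbar) = \alphaopt^\top \Fmata \phi(\sbar) + \alphaopt^\top (\Fhmata - \Fmata)\, \phi(\sbar),
\]
I would bound the second summand via the standard inequality $|u^\top M v| \leq \|u\|_1 \|M\|_\infty \|v\|_\infty$, together with $\|\phi(\sbar)\|_\infty \leq 1$ (valid for the trigonometric basis used in Section \ref{sec:fourier}), to obtain $\alphaopt^\top \Fhmata \phi(\sbar) \geq \beta - (\beta - c\rho)/2 = (\beta + c\rho)/2$. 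Scaling $\alphaopt$ by $2/(\beta + c\rho)$ then produces a feasible point, and by LP optimality
\[
\|\alphahat\|_1 \leq \frac{2}{\beta + c\rho}.
\]

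Next, fix $s \in S$ and $a \in A \setminus \{a_1\}$, choose $\sbar \in \Sbar$ with $|s - \sbar| < c$, and decompose
\[
\alphahat^\top \Fmata \phi(s) = \alphahat^\top \Fhmata \phi(\sbar) + \alphahat^\top (\Fmata - \Fhmata)\, \phi(\sbar) + \alphahat^\top \Fmata\, [\phi(s) - \phi(\sbar)].
\]
The first term is $\geq 1$ by feasibility; the second is $\geq -\|\alphahat\|_1 (\beta - c\rho)/2$ by the same perturbation bound as before. For the third term I would write $\phi(s) - \phi(\sbar) = \int_{\sbar}^{s} \phi'(t)\, dt$ componentwise, justify the interchange of integration with the infinite summations (by absolute convergence, using $\|\alphahat\|_1 < \infty$ and $\|\Fmata \phi'(t)\|_\infty \leq \rho$), and conclude
\[
\bigl| \alphahat^\top \Fmata\, [\phi(s) - \phi(\sbar)] \bigr| \leq |s - \sbar| \cdot \|\alphahat\|_1 \rho < c\, \|\alphahat\|_1 \rho.
\]
Assembling the three bounds gives $\alphahat^\top \Fmata \phi(s) > 1 - \|\alphahat\|_1 (\beta + c\rho)/2 \geq 0$, where the last inequality uses the $\ell_1$ bound from the feasibility step.

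The main obstacle is the tight calibration of constants: the hypothesis $\|\Fhmata - \Fmata\|_\infty \leq (\beta - c\rho)/2$ is chosen precisely so that the $\ell_1$ bound from Step 1 exactly cancels the negative correction appearing in Step 2, and strict positivity is only recovered through the strict covering inequality $|s - \sbar| < c$. A secondary concern is the careful justification of componentwise differentiation and of exchanging integration with the infinite summations implicit in $\alphahat^\top \Fmata \phi'(t)$; this will rest on $\|\alphahat\|_1 < \infty$ combined with the uniform $\ell_\infty$ bound $\|\Fmata \phi'(t)\|_\infty \leq \rho$.
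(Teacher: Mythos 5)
Your proof is correct and takes essentially the same route as the paper's: you scale the $\beta$-separability witness to obtain LP feasibility and the bound $\|\alphahat\|_1 \leq 2/(\beta + c\rho)$ (identical to the paper's $1/(\beta - \epsilon)$ with $\epsilon = (\beta - c\rho)/2$), then combine the constraint value at a cover point $\sbar$, the $\|\Fhmata - \Fmata\|_\infty$ perturbation term, and the $\rho$-Lipschitz term into the same final cancellation. Your explicit flagging of $\|\phi(\sbar)\|_\infty \leq 1$ and of the summation--integration interchange merely makes transparent two steps the paper leaves implicit.
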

\end{samepage}

In Figure 1, we provide an intuitive illustration of the correctness of Algorithm \ref{algo:irl}, which relates to the blue line denoting $\alpha^\top \Fmata \phi(s)$ being positive over $S$ for a given $a \in A$. First, the difference between the red and blue line must be small, which corresponds to $\|\Fhmata - \Fmata\|_\infty$.  Secondly, the piece-wise linear bounds composing the red line must be sufficiently ``high''.  The output $\alphahat$ will be chosen so that all points on the black curve corresponding to $\sbar \in \Sbar$ will equal at least 1.  

We now have the necessary components to prove general conditions on the infinite matrix representation under which Algorithm \ref{algo:irl} outputs a solution to the IRL problem with high probability.
\begin{samepage}
\begin{theorem} \label{thm:correctness}
If the given MDP without reward function, $(\operatorname{MDP}\setminus R)$, is $\beta$-separable, $\beta > c\rho$, and for all $a\in A,~s\in [-1,1]$, the following conditions hold:
\begin{gather*}
    (i)~\|\Fmat^{(a)}\phi'(s)\|_\infty < \rho,
    ~~~~~(ii)~\|\Zmata\|_\infty \leq \Delta < \frac{1}{2\gamma}, 
    ~~~~~(iii)~\|\Zcomp^{(a)}\|_\infty \leq \frac{\beta - c\rho}{8} \left(\frac{1}{2}-\gamma\Delta\right)^2 , \\
    n \geq \max \left\{32, \frac{8^3}{(\beta - c\rho)^2(\frac{1}{2} -\gamma\Delta)^4} \right\} k^2 \log \frac{2k^2|A|}{\delta},
\end{gather*}
then Algorithm \ref{algo:irl} called with parameters $(\gamma, c, n, k)$ returns a correct reward function with probability at least $1-\delta$ by solving a linear program with $k$ variables and $\Ocal(\frac{k|A|}{c})$ constraints.
\end{theorem}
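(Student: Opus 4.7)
The plan is to chain together three technical results from earlier sections---Theorem \ref{thm:estimate} (to bound the per-action estimation error of $\Zmata$), Lemma \ref{lemma:calculate_F} (to propagate this into a bound on $\|\Fhmata - \Fmata\|_\infty$), and Lemma \ref{lemma:correctness_condition} (to show the LP in Algorithm \ref{algo:irl} produces an $\alphahat$ for which the Bellman inequality $\alphahat^\top \Fmata \phi(s) > 0$ holds pointwise on $S$). A final invocation of Lemma \ref{lemma:bellman_criteria} then converts this into strict optimality of the policy under the recovered reward $\widehat{R}$, which is precisely the notion of correctness the theorem demands.

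Concretely, set $\epsilon_0 = \tfrac{\beta - c\rho}{8}\bigl(\tfrac{1}{2} - \gamma\Delta\bigr)^2$. Condition (iii) is exactly $\|\Zcomp^{(a)}\|_\infty \leq \epsilon_0$, and the sample count $n$ assumed in the theorem matches $\tfrac{8k^2}{\epsilon_0^2}\log\tfrac{2k^2|A|}{\delta}$, which is the requirement of Theorem \ref{thm:estimate} when the per-action failure probability is set to $\delta/|A|$ (this is where the factor of $|A|$ inside the log comes from). Running Algorithm \ref{algo:estimate_Z} once per action and applying a union bound, we obtain that, with probability at least $1-\delta$, $\|\Zhmata - \Zmata\|_\infty \leq 2\epsilon_0$ for every $a \in A$ simultaneously.

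Next, I would apply Lemma \ref{lemma:calculate_F} with a common norm bound $\Delta' = \Delta + 2\epsilon_0$ on both $\|\Zmata\|_\infty$ and $\|\Zhmata\|_\infty$. Condition (ii) supplies $\|\Zmata\|_\infty \leq \Delta$, the triangle inequality gives $\|\Zhmata\|_\infty \leq \Delta'$, and $\gamma\Delta' < 1$ follows because $\gamma\Delta < 1/2$ by (ii) while $\gamma\epsilon_0$ is small (it scales with $(1/2 - \gamma\Delta)^2$). A direct calculation tracking constants shows that the bound produced by Lemma \ref{lemma:calculate_F}, namely $4\epsilon_0/(1 - \gamma\Delta')^2$, is at most $(\beta - c\rho)/2$ for this choice of $\epsilon_0$. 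Conditions (i), the $\beta$-separability hypothesis, and the $c$-cover $\Sbar$ built in step 3 of Algorithm \ref{algo:irl} supply all remaining premises of Lemma \ref{lemma:correctness_condition}, whose conclusion $\alphahat^\top \Fmata \phi(s) > 0$ then holds for all $s \in S$ and every $a \in A \setminus \{a_1\}$. Lemma \ref{lemma:bellman_criteria} immediately finishes the correctness half of the statement.

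For the LP complexity, observe that each $\Zhmata$ produced by Algorithm \ref{algo:estimate_Z} is $k$-truncated, so products, powers, and hence each $\Fhmata$ are $k$-truncated as well. Consequently only the first $k$ coordinates of $\alpha$ appear in any constraint of step 4; this yields $k$ effective decision variables, $\ceil{2/c}(|A|-1) = \Ocal(|A|/c)$ Bellman inequality constraints, and $\Ocal(k)$ auxiliary linear constraints to encode $\|\alpha\|_1$, summing to $\Ocal(k|A|/c)$ constraints overall. The main obstacle I anticipate is the constant-bookkeeping in the error-propagation step: the unusual $(1/2 - \gamma\Delta)$ appearing in conditions (ii)--(iii) and the $8^3$ in the sample bound are dictated precisely by the interplay between the inflated norm bound $\Delta'$ needed in Lemma \ref{lemma:calculate_F} and the factor of $2$ lost in Theorem \ref{thm:estimate}, and getting these constants to line up exactly with the threshold $(\beta - c\rho)/2$ required by Lemma \ref{lemma:correctness_condition} is the only non-mechanical part of the argument.
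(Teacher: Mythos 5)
Your overall chain is exactly the paper's proof: apply Theorem \ref{thm:estimate} per action with confidence $\delta/|A|$ (your matching of $\epsilon_0 = \tfrac{\beta-c\rho}{8}(\tfrac12-\gamma\Delta)^2$ to the $8^3$ term in $n$ is correct, as is attributing the $|A|$ in the logarithm to the union bound), propagate via Lemma \ref{lemma:calculate_F}, conclude via Lemma \ref{lemma:correctness_condition} and the Bellman characterization, and count the LP size (where your accounting is in fact more explicit than the paper's, which does not spell out the $\Ocal(k)$ constraints needed to linearize $\|\alpha\|_1$). The one place you deviate is the inflated norm bound fed into Lemma \ref{lemma:calculate_F}, and that is where your argument has a genuine gap. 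With your choice $\Delta' = \Delta + 2\epsilon_0$, the inequality you assert, $4\epsilon_0/(1-\gamma\Delta')^2 \leq (\beta-c\rho)/2$, is equivalent after substituting $\epsilon_0$ to $(\tfrac12-\gamma\Delta)^2 \leq (1-\gamma\Delta-2\gamma\epsilon_0)^2$, i.e.\ to $2\gamma\epsilon_0 \leq \tfrac12$. Nothing in the theorem's hypotheses forces this: $\epsilon_0$ grows linearly in $\beta - c\rho$, and neither the theorem nor $\beta$-separability places an upper bound on $\beta$. So your ``direct calculation tracking constants'' closes only under an unproven smallness condition on $\gamma\epsilon_0$; worse, if $2\gamma\epsilon_0 \geq 1-\gamma\Delta$ then $\gamma\Delta' \geq 1$ and Lemma \ref{lemma:calculate_F} is not even applicable.

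The paper sidesteps this with a \emph{fixed} inflation: it takes $\Delta' = \Delta + \tfrac{1}{2\gamma}$, for which condition (ii) gives $1-\gamma\Delta' = \tfrac12-\gamma\Delta > 0$ automatically, so the ratio $(\tfrac12-\gamma\Delta)^2/(1-\gamma\Delta')^2$ equals exactly $1$ and Lemma \ref{lemma:calculate_F} yields $\|\Fhmata - \Fmata\|_\infty < \tfrac{\beta-c\rho}{2}$ with no side condition on $\beta$. The bound $\|\Zhmata\|_\infty \leq \Delta + \tfrac{1}{2\gamma}$ is precisely what the otherwise-mysterious $32$ in the $\max$ defining $n$ buys (Theorem \ref{thm:estimate} applied with $\epsilon = \tfrac12$, giving estimation error at most $1$; the paper implicitly treats this as at most $\tfrac{1}{2\gamma}$). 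You noticed the $32$ but never used it --- that unused hypothesis is the symptom of the gap. To repair your write-up, adopt the paper's $\Delta' = \Delta + \tfrac{1}{2\gamma}$, or else cap the estimation error at $\min\{1, 2\epsilon_0\}$ and actually prove the smallness of $\gamma\epsilon_0$ you currently assert; as written, the constants do not ``line up exactly'' for all admissible $\beta$.
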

\end{samepage}
\begin{remark}
The conditions (i)-(iii) on the infinite-matrix representations of the transition functions given in Theorem \ref{thm:correctness} are natural for many orthonormal bases.  We explain the purpose of each condition and how they relate to assumptions on the MDP without reward function $(\operatorname{MDP}\setminus R)$.
\end{remark}
The first condition (i) enforces that the Bellman Optimality condition $\alpha^\top \Fmat \phi(s)$ is $\rho$-Lipschitz continuous when $\|\alpha\|_1 = 1$.  A finite value for $\rho$ exists over the basis $\{\phi_n(s)\}_{n\in \mathbb{N}}$ whenever there is a convergent representation of the derivative of expected optimal reward.  Intuitively, this means that expected reward of the optimal policy does not change too quickly with slight changes in the starting state.

The second condition (ii) is needed to enforce that the infinite summation which equals $\Fmat$ (see Equation \ref{eq:f_matrix}) converges in the induced infinity norm. Conditions on the MDP which enforce this assumption depend on the underlying basis of functions.

Finally, the third condition (iii) is always satisfied by some $k \in \mathbb{N}$ since it is assumed that each transition function has a convergent series representation.  Different orthonormal bases provide different guarantees on bounding $k$ given assumptions on the transition function.  We will show an example of such bounds for a trigonometric basis in Section \ref{sec:fourier}.

We note that our sample complexity,
\begin{gather*}
    n \in \Ocal\left(\frac{1}{(\beta - c\rho)^2} k^2 \log \frac{k}{\delta}\right),
\end{gather*}
matches the complexity of the discrete setting algorithm given by \citet{komanduru} when $\frac{\beta}{c\rho} = \Ocal(1)$.

\section{Fourier instantiation} \label{sec:fourier} 

The conditions (i) - (iii) of Theorem \ref{thm:correctness} were chosen to maintain generality while being easily verified by leveraging classic results on orthonormal bases.  To illustrate this claim, we prove that Theorem \ref{thm:correctness} applies to a class of transition functions over a trigonometic basis by using standard methods of proof for Fourier series.  Due to space constraints, we leave the technical derivations to Appendix \ref{app:fourier_instantiation} and focus on the high-level idea in this section. First, we define the basis of trigonometric functions $\{\phi_n(s)\}$ as
\begin{gather*}
    \phi_n(s) = \cos \left( \floor{n/2} \pi s \right)
    ~~~\text{for all odd }n, 
    \quad \phi_n(s) = \sin \left((n/2) \pi s \right)
    ~~~\text{for all even }n.
\end{gather*}

\newpage
\begin{lemma} \label{lemma:fourier_deriv_bounds}
    If $P_a(\stilde|s) = \sum_{i,j=1}^\infty \phi_i(\stilde)\Zmata_{ij} \phi_j(s)$ where $\{\phi_n(s)\}$ are the trigonometric basis functions, $0 < \Delta < \frac{1}{2\gamma}$, $\epsilon > 0$, and
    \[
        \left|\frac{\partial^6}{\partial \stilde^3 \partial s^3} P_a(\stilde|s)\right| < \frac{\pi^6\Delta}{\zeta(3)},
    \]
then
\begin{gather*}
    (i)~\|\Fmat \phi'(s)\|_\infty < \frac{4 \pi\Delta\zeta(2)}{\zeta(3)},
    ~~~~~(ii)~ \|\Zmat\|_\infty < \Delta,
    ~~~~~(iii)~\|\Zcomp \|_\infty < \epsilon,
\end{gather*}
with truncation parameter $k$ on the order of $k \in \Ocal \left (  \sqrt{\frac{\Delta}{\epsilon} } \right)$, where $\zeta(r)= \sum_{n=1}^\infty \frac{1}{n^r}$ is the Riemann zeta function.
\end{lemma}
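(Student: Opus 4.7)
The plan is to exploit the classical fact that the Fourier coefficients of a smooth function decay polynomially at a rate dictated by the highest bounded derivative. Concretely, writing $\mathcal{Z}^{(a)}_{ij}$ as the double integral $\int_{-1}^1\int_{-1}^1 P_a(\tilde s\mid s)\,\phi_i(\tilde s)\phi_j(s)\,d\tilde s\,ds$ and integrating by parts three times in each of $\tilde s$ and $s$, every coefficient can be re-expressed in terms of $\partial^6 P_a/\partial \tilde s^3\partial s^3$ divided by a factor proportional to $(ij)^3$. The boundary terms vanish thanks to the parity and periodicity properties of the specific $\cos$/$\sin$ basis defined at the start of Section~\ref{sec:fourier}. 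Plugging in the hypothesis $|\partial^6 P_a/\partial\tilde s^3\partial s^3|<\pi^6\Delta/\zeta(3)$ then yields a pointwise bound of the form $|\mathcal{Z}^{(a)}_{ij}|\le \Delta/(\zeta(3)\,i^3 j^3)$ up to the absorbed numerical constants. Once this decay is established, the three stated conclusions reduce to elementary tail estimates for $\sum 1/n^r$.

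For condition (ii), I would bound $\|\mathcal{Z}^{(a)}\|_\infty=\max_i\sum_j|\mathcal{Z}^{(a)}_{ij}|$ by $(\Delta/\zeta(3))\cdot(1/i^3)\sum_j 1/j^3 \le \Delta/i^3\le\Delta$. For condition (iii), I would split $\mathcal{Z}^{\infty}_k$ into the rows $i\le k$, whose tails over $j>k$ contribute $\mathcal{O}(\Delta/(i^3 k^2))$, and the rows $i>k$, whose full row sums are $\mathcal{O}(\Delta/i^3)$ and therefore dominated by $\Delta/(k+1)^3$. The dominant term is of order $\Delta/k^2$, so requiring this to be at most $\epsilon$ forces $k\in\mathcal{O}(\sqrt{\Delta/\epsilon})$, matching the rate claimed in the lemma.

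The substantive obstacle is condition (i), because passing to $\phi'_j$ introduces an extra factor of $j$ that naively threatens summability. My plan is to first control $\|\mathcal{Z}^{(a)}\phi'(s)\|_\infty$ directly: since $|\phi'_j(s)|\le\pi j$, the row sum becomes $\sum_j|\mathcal{Z}^{(a)}_{ij}|\cdot\pi j\le (\pi\Delta/\zeta(3))(1/i^3)\sum_j 1/j^2=\pi\Delta\zeta(2)/(\zeta(3)i^3)$, which still converges thanks to the extra $1/j^3$ decay inherited from the integration-by-parts estimate. I would then expand
\begin{equation*}
\mathcal{F}^{(a)}\phi'(s)=\sum_{r=0}^\infty \gamma^r (\mathcal{Z}^{(a_1)})^r(\mathcal{Z}^{(a_1)}-\mathcal{Z}^{(a)})\phi'(s),
\end{equation*}
apply the sub-multiplicativity $\|\mathcal{A}v\|_\infty\le\|\mathcal{A}\|_\infty\|v\|_\infty$ together with a triangle inequality $\|(\mathcal{Z}^{(a_1)}-\mathcal{Z}^{(a)})\phi'(s)\|_\infty\le 2\|\mathcal{Z}\phi'(s)\|_\infty$, and use condition (ii) to obtain a geometric series of ratio $\gamma\Delta<1/2$ that sums to at most $2$. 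Multiplying the bounds gives the claimed $4\pi\Delta\zeta(2)/\zeta(3)$.

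The main technical hurdles I expect are, first, verifying that all boundary terms in the iterated integration by parts actually vanish for the precise basis used here (they should, since the basis functions are eigenfunctions of $d^2/ds^2$ with matching endpoint behaviour), and second, justifying the interchange of sum and integral and the convergence of the infinite matrix–vector products invoked above. Both points rest on the absolute summability estimate $|\mathcal{Z}^{(a)}_{ij}|\lesssim 1/(i^3 j^3)$, so once the coefficient decay is in hand the remaining bookkeeping is routine Fubini/dominated-convergence work.
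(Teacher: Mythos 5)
Your proposal is correct and follows essentially the same route as the paper: establish the coefficient decay $|\Zmata_{ij}| \lesssim \Delta/(\zeta(3)\,i^3 j^3)$ from the sixth mixed-partial bound, deduce (ii) and (iii) by zeta-tail estimates with $k \in \Ocal\bigl(\sqrt{\Delta/\epsilon}\bigr)$, and obtain (i) by combining $|\phi'_n(s)| \le \pi n$ with the geometric series of ratio $\gamma\Delta < \tfrac{1}{2}$ summing to $2$. The only cosmetic difference is that you derive the decay by integrating by parts inside the coefficient integral, whereas the paper differentiates the series termwise and identifies the coefficients $\Zmat'_{ij}$ of $\partial^6 P_a/\partial\stilde^3\partial s^3$ with $\pi^6 i^3 j^3 \Zmat_{ij}$ (up to index shifts) --- dual computations resting on the same orthonormality and differentiation identities, with the same implicit regularity bookkeeping that you correctly flag.
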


A classical result on Fourier series is that if a periodic function $f$ is of class $\mathbb{C}^p$, then the magnitude of the $n$-th coefficient of its Fourier series is on the order of $\Ocal(\nicefrac{1}{n^p})$.  This can be used to bound the entries of $\Zmata$ and $\Fmat^{(a)}$. Then, known inequalities of the real Riemann Zeta function and its variants can be used to guarantee the absolute infinite summations that make up conditions (i)-(iii) are appropriately bounded. An additional condition of $\Delta < \frac{1}{4\gamma}$ is then sufficient to guarantee Algorithm \ref{algo:irl} has a simplified sample complexity of $\Ocal(\beta^{-3} \log \nicefrac{1}{\beta\delta})$ (see Appendix \ref{app:fourier_instantiation}).

\section{Experiments}

We validate our theoretical results by testing Algorithm \ref{algo:estimate_Z} and \ref{algo:irl} on randomly generated IRL problems using polynomial transition function.  We give high level results in this section and leave the details of the experiment in Appendix \ref{app:experiments}. 

The following graphs indicate that the theoretical sample complexity of Algorithm \ref{algo:estimate_Z} given by Theorem \ref{thm:estimate} is correct.  We observe that the number of samples $n$ to achieve estimation error $\epsilon$ is less than $\frac{8k^2}{\epsilon^2} \log \frac{2k^2}{\delta}$, the theoretically guaranteed sample complexity.

\begin{figure}[H]
    \centering
    \includegraphics[width=\linewidth]{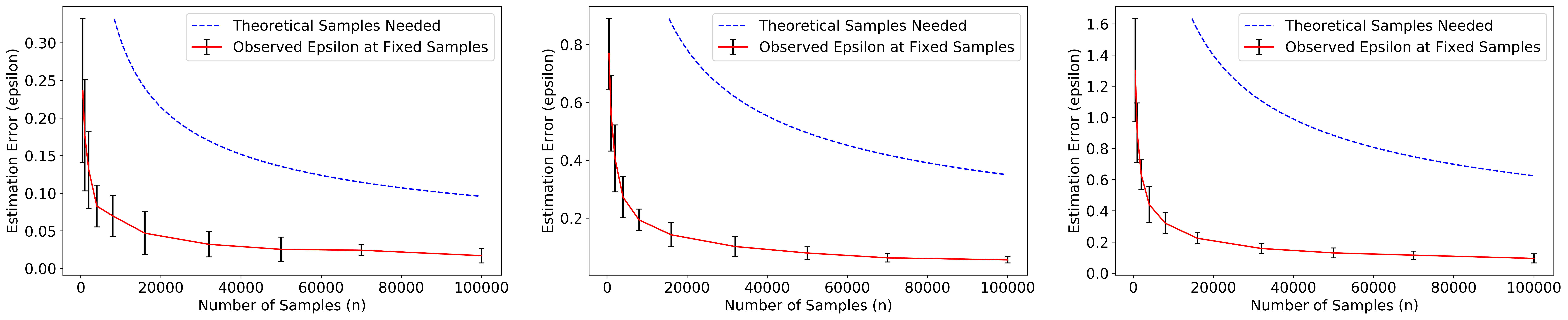}
    \caption{The red line shows the error in estimating the coefficient matrix as $\|\Ztrunc - \Zhmat\|_\infty$ versus the number of samples $n$ using Algorithm \ref{algo:estimate_Z} for truncation parameter $k=5$, $k=15$, $k=25$.  Error bars represent two standard deviations of the observed error.  The line `Theoretical Samples Needed' shows the corresponding sample complexity given by Theorem \ref{thm:estimate}.}
    \label{fig:algo1_graphs}
\end{figure}

Next, we verify the sample complexity of Algorithm \ref{algo:irl}, while varying the representation size controlled by truncation parameter $k$. We note that the separability parameter $\beta$ has a substantial impact on the sample complexity, as indicated by the complexity result of Theorem \ref{thm:correctness}.  We explore this more in Appendix \ref{app:experiments}. 

We test the sample complexity result of Algorithm \ref{algo:irl} by varying the truncation parameter $k$ and number of samples $n$ for a fixed IRL problem; that is, the actions, discount factor, and transition functions do not change. By keeping the IRL problem fixed, we can derive the following simplified relation from the sample complexity of Theorem \ref{thm:correctness} where $C > 0$ is a constant.

\begin{gather*}
    n = \frac{8^3}{(\beta - c\rho)^2(\frac{1}{2} -\gamma\Delta)^4} k^2 \log \frac{2k^2|A|}{\delta} \\
    \iff \frac{n}{k^2} = \frac{8^3}{(\beta - c\rho)^2(\frac{1}{2} -\gamma\Delta)^4} (\log 2k^2|A| + \log \frac{1}{\delta}) \\
    \iff \log \frac{1}{\delta} = C\frac{n}{k^2} - \log 2k^2|A|
\end{gather*}   

The relation $\log \frac{1}{\delta} = C\frac{n}{k^2} - \log 2k^2|A|$ indicates that we expect to see an linear relationship between $\log \nicefrac{1}{\delta}$ and $\nicefrac{n}{k^2}$.  Indeed, we observe this relationship for various values of $k$.


\begin{figure}[!ht]
\centering
  \includegraphics[width=0.5\linewidth]{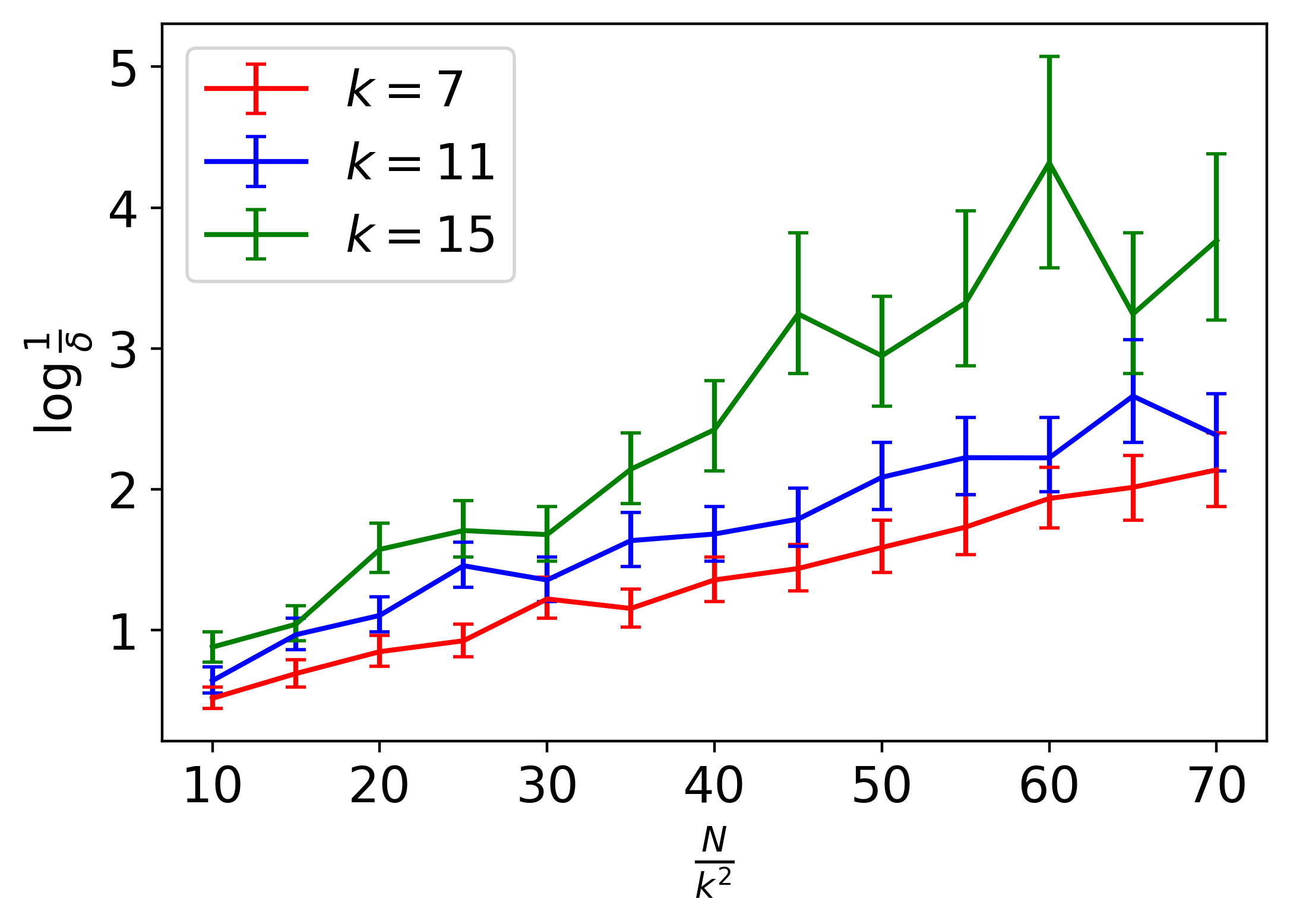}
    \caption{This plot shows the linear relationship between $\log \nicefrac{1}{\delta}$ and $\nicefrac{n}{k^2}$ while varying the truncation parameter $k$.  The error bars show $95\%$ confidence intervals computed via bootstrapping from 240 trials. Other constants: $\beta^{-1} = 26.5$, $\gamma = 0.7$, $c = 0.05$, $\Delta=1.45$, and $|A|=3$.}
    \label{fig:algo2_graphs}
\end{figure}

\section{IRL in $d$-dimensions}

In this section, we discuss IRL in the $d$-dimensional setting, i.e., $S = [-1,1]^d$. One obvious way of extending Algorithm \ref{algo:irl} to the $d$-dimensional setting would be to modify Lemma \ref{lemma:correctness_condition} to use $\ell_2$-Lipschitz continuity of $\alpha^\top \Fmata \phi(s)$ with an orthonormal basis $\{\phi_n(s)\}_{n \in \mathbb{N}}$ over $[-1, 1]^d$.  Such a $d$-dimensional basis can be constructed by taking the product of $1$-dimensional basis functions. This approach has the advantage that it allows for weak assumptions on the $d$-dimensional transition functions, which only need to satisfy the conditions of Theorem \ref{thm:correctness} when represented over some basis. However, this would require an exponential increase in the size of the covering set $\Sbar$ used in Algorithm \ref{algo:irl} and number of samples needed if the representativeness of the transition functions in each dimension are not restricted further.

It is unlikely that the general $d$-dimensional case be handled without exponential scaling in $d$ since this also occurs in the finite state space setting.  To see this, notice that if $|S|$ is finite then an IRL problem with state space $S^d$ can be equivalent to any IRL problem on a space with $|S|^d$ states when the transition function on $S^d$ is arbitrary. Therefore, it is necessary to add additional assumptions to reduce the worst-case computational and statistical complexity.

To simplify the $d$-dimensional setting, we make the assumption that the total $d$-dimensional state space of the MDP can be decomposed as the direct sum of $T$ subspaces of dimension at most $q$ such that the total transition function can be written as a product of transition functions on each of these component subspaces.  In other words, we can decompose the state space as follows, $S = S^{(1)}\oplus S^{(2)}\oplus...\oplus S^{(T)}$ such that $\dim(S^{(j)}) \leq q$.  Then, given this decomposition, we require that for all $a \in A$, $P_a(s_1|s_0) = \prod_{j=1}^T P_a(s_1^{(j)}|s_0^{(j)})$, where $s^{(j)}$ denotes the projection of $s$ to $S^{(j)}$. If the transition functions of a given MDP satisfy such assumptions, then we can solve the IRL problem in this setting in $\Ocal(T\exp(q))$ times the computational time of solving the 1-dimensional IRL problem due to the following theorem.
\begin{theorem}\label{thm:d_dimensions}
    Let $(S, A, \{P_a(s_1|s_0)\}, \gamma)$ represent an $(\operatorname{MDP} \setminus R)$ such that $S=[-1,1]^d$ with some decomposition $S = S^{(1)} \oplus S^{(2)} \oplus ... \oplus S^{(T)}$ such that $S^{(i)} \subset S$, and $\dim(S^{(i)}) \leq q$.  Let the transition function operates independently on these components, i.e., $P_a(s_1|s_0) = \prod_{j=1}^T P_a(s_1^{(j)}|s_0^{(j)})$ for all $a \in A$ with $s^{(j)} = \operatorname{Proj}_{S^{(j)}}(s)$.  If each component IRL problem with transition functions $\{P_a(s_1^{(j)}|s_0^{(j)})\}_{a\in A}$ is solved by reward $R^{(j)}$, then the total $d$-dimensional IRL problem is solved by the reward function $R(s) = \sum_{j=1}^T R^{(j)}(s^{(j)})$.
\end{theorem}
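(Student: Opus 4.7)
The plan is to show that with reward $R(s) = \sum_{j=1}^T R^{(j)}(s^{(j)})$, both the value function and the Q-function of the combined MDP decompose additively over the subspace factorization; strict Bellman Optimality in each component then sums to strict Bellman Optimality for the whole MDP.

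First I would decompose the value function. Under the policy $\pi \equiv a_1$ and the factored kernel $P_{a_1}(s_1|s_0) = \prod_{j=1}^T P_{a_1}(s_1^{(j)}|s_0^{(j)})$, a standard Fubini argument shows that the marginal coordinate processes $\{s_t^{(j)}\}_{t \geq 0}$ are mutually independent Markov chains on $S^{(j)}$ with kernel $P_{a_1}(\cdot|\cdot)$. Linearity of expectation applied to the additive reward then gives
\[
V^\pi(s_0) = \Ebb\left[\sum_{t=0}^\infty \gamma^t R(s_t)\,\Big|\,\pi\right] = \sum_{j=1}^T V^{(j),\pi}(s_0^{(j)}),
\]
where $V^{(j),\pi}$ is the value function of the $j$-th component IRL problem under reward $R^{(j)}$.

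Second, because the one-step kernel $P_a(\cdot|s)$ factors across components for \emph{every} action $a$ (not only $a_1$), the same argument applied to one-step expectations gives
\[
Q^\pi(s, a) = R(s) + \gamma\,\Ebb_{s' \sim P_a(\cdot|s)}[V^\pi(s')] = \sum_{j=1}^T Q^{(j),\pi}(s^{(j)}, a).
\]
By hypothesis each $R^{(j)}$ solves the $j$-th component IRL problem, so the strict form of the Bellman Optimality Equation (\ref{bellman_criteria}) gives $Q^{(j),\pi}(s^{(j)}, a_1) > Q^{(j),\pi}(s^{(j)}, a)$ for every $s^{(j)} \in S^{(j)}$ and every $a \in A \setminus \{a_1\}$. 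Summing these strict inequalities across $j$ yields $Q^\pi(s, a_1) > Q^\pi(s, a)$ for every $s \in S$ and $a \neq a_1$, which is exactly the strict Bellman Optimality condition for the combined MDP and establishes that $R$ solves the $d$-dimensional IRL problem.

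The only non-routine ingredient is verifying that the factored kernel $P_a(s_1|s_0) = \prod_j P_a(s_1^{(j)}|s_0^{(j)})$ induces genuinely independent component marginal processes. This is a standard Fubini/Tonelli computation once one observes that each factor $P_a(\cdot|s_0^{(j)})$ is a legitimate probability density on $S^{(j)}$, obtained by integrating the full kernel over the complementary coordinates. Everything else is bookkeeping via linearity of expectation and summation of strict inequalities, so I do not expect any serious technical obstacle.
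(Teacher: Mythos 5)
Your proposal is correct and follows essentially the same route as the paper's proof: the paper likewise establishes by induction (a Fubini-type computation) that the multi-step transition kernel factors as $P_a(s_r|s_0) = \prod_{j=1}^T P_a(s_r^{(j)}|s_0^{(j)})$, expands the expected discounted reward so that cross-component factors integrate to one, and then sums the strict component-wise Bellman inequalities. Your organization via additive decompositions of $V^\pi$ and $Q^\pi$ is just a cleaner packaging of the paper's direct manipulation of $\Ebb[V^\pi(s_1)\,|\,s_0,a]$, with no substantive difference in the argument.
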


This assumption is motivated by neurological evidence that human cognition handles the curse-of-dimensionality by decomposing a high-dimensional task into multiple subtasks \cite{gershman2009human}.  This assumption was successfully used by \citet{rothkopf2013modular} in their Modular IRL algorithm to represent human navigation preference, implying that reward functions generated under this assumption could be particularly effective at representing human behavior.  Further research on the type of assumptions which are empirically justified and allow strong theoretical guarantees in $d$-dimensions would be an interesting future topic of study.

\section{Discussion}

We have presented a formulation of IRL in the continuous setting by representing the transition functions with infinite-matrices over an orthonormal function basis. We show in Theorem \ref{thm:correctness} that under natural assumptions on the transition function representations, Algorithm \ref{algo:irl} returns a reward function solving the IRL problem with high probability.

Our method of estimating the transition matrices has a sample complexity of $\Ocal\left(\frac{k^2}{(\beta - c\rho)^2} \log \frac{k}{\delta}\right)$, which is same as the discrete IRL method in \cite{komanduru} when $\frac{\beta}{c\rho} = \Ocal(1)$.  The computational complexity of Algorithm \ref{algo:irl} is dominated by solving a linear program with $k$ variables and $\Ocal(\frac{k|A|}{c})$ constraints. Furthermore, we show how our main theorem, Theorem \ref{thm:correctness}, can be used to guarantee Algorithm \ref{algo:irl} for a class of transition functions when represented over a trigonometric basis.  We verify our theoretical results by empirically testing our method on randomly generated IRL problems with polynomial transition functions.  Finally, we discuss how our method can be applied in the $d$-dimensional setting.

In future work, it would useful to better characterize transition functions that appear in applications of IRL.  Much like the wavelet basis allows for sparse representations of audio and video, it is possible that particular orthonormal bases would be better suited for certain tasks where IRL is used.  Better understanding of transition functions which appear in practice, such as whether bounds on its partial derviatives are justified, paired with more sophisticated results on orthonormal bases, such as those given by \citet{izumi1968absolute}, could allow for stronger theoretical guarantees in particular domains.  Another potential line of future work could be determining which assumptions allow IRL to scale efficiently with dimension of the state space, while maintaining effectiveness on practical tasks.  While our assumption on the decomposability of $d$-dimensional transition functions is supported by prior literature on human motor control, it is unclear how limiting this assumption would be for other applications of IRL.  Different methods of simplifying IRL in the continuous $d$-dimensional setting may provide greater flexibility.

\bibliography{citations}

\clearpage
\appendix
\onecolumn

\textbf{\Large Supplementary material: Inverse Reinforcement Learning in the Continuous Setting with Formal Guarantees}

\section{Proofs of lemmas and theorems}
\label{app:proofs}
\subsection{Additional lemma}
\begin{lemma} \label{lemma:supp}
Let $s_0$ be the starting state, let $(\ab)_n$ represent a sequence of actions and let $\Mcal = \Zmat^{(\ab_r)} \Zmat^{(\ab_{r-1})}...\Zmat^{(\ab_1)}$ i.e., the product of matrices in $\{\Zmata\}$ left multiplied in order of the sequence $(\ab)_n$, then, 
$$
P(s_r|s_0, (\ab)_n) = \phi(s_r)^\top \Mcal \phi(s_0).
$$

\end{lemma}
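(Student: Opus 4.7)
The plan is to prove the identity by induction on $r$, using the series representation of the transition density from Equation \ref{eq:transition_series} together with the orthonormality of $\{\phi_n(s)\}_{n\in\mathbb{N}}$ over $S=[-1,1]$.

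For the base case $r=1$, the statement is essentially a rewriting of Equation \ref{eq:transition_series}: we have
\begin{equation*}
P(s_1 \mid s_0, \ab_1) \;=\; P_{\ab_1}(s_1 \mid s_0) \;=\; \sum_{i,j=1}^\infty \phi_i(s_1)\, \Zmat^{(\ab_1)}_{ij}\, \phi_j(s_0) \;=\; \phi(s_1)^\top \Zmat^{(\ab_1)} \phi(s_0),
\end{equation*}
where the last equality simply collects the double sum into the infinite matrix-vector notation defined earlier.

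For the inductive step, assume the claim holds for sequences of length $r-1$, and let $\Mcal' = \Zmat^{(\ab_{r-1})} \cdots \Zmat^{(\ab_1)}$ so that $\Mcal = \Zmat^{(\ab_r)} \Mcal'$. By the Markov property and marginalizing over the intermediate state $s_{r-1}$,
\begin{equation*}
P(s_r \mid s_0, (\ab)_r) \;=\; \int_{-1}^{1} P_{\ab_r}(s_r \mid s_{r-1})\, P(s_{r-1} \mid s_0, (\ab)_{r-1})\, ds_{r-1}.
\end{equation*}
Substituting the base case for the first factor and the inductive hypothesis for the second yields
\begin{equation*}
P(s_r \mid s_0, (\ab)_r) \;=\; \int_{-1}^{1} \bigl(\phi(s_r)^\top \Zmat^{(\ab_r)} \phi(s_{r-1})\bigr) \bigl(\phi(s_{r-1})^\top \Mcal' \phi(s_0)\bigr)\, ds_{r-1}.
\end{equation*}
Pulling the $s_r$- and $s_0$-dependent vectors outside the integral and using the orthonormality relation $\int_{-1}^1 \phi_i(s_{r-1})\phi_j(s_{r-1})\,ds_{r-1} = \delta_{ij}$, the inner integral $\int_{-1}^{1} \phi(s_{r-1})\phi(s_{r-1})^\top\, ds_{r-1}$ collapses to the infinite identity matrix, giving $\phi(s_r)^\top \Zmat^{(\ab_r)} \Mcal' \phi(s_0) = \phi(s_r)^\top \Mcal \phi(s_0)$, which completes the induction.

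The main technical obstacle is justifying the interchange of the integral with the infinite sums implicit in the matrix products, since infinite-matrix arithmetic is not automatically well-defined. I would address this by invoking the absolute row-summability conditions on $\Zmata$ emphasized in the Preliminaries (``bounded row absolute-summability''), which, together with uniform boundedness of $|\phi_n(s)|$ on $[-1,1]$ for the bases under consideration, allows a Fubini/dominated convergence argument to swap the integral and the double sum, thereby reducing the computation to the orthonormality identity used above.
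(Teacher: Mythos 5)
Your proof is correct and follows essentially the same route as the paper's: induction on the number of transitions, marginalizing over the intermediate state, interchanging the integral with the double series, and collapsing the cross terms via orthonormality (the paper justifies the interchange by citing absolute summability of the coefficients, matching your Fubini/dominated-convergence argument). The only cosmetic difference is that you anchor the induction at $r=1$ directly from Equation \ref{eq:transition_series}, whereas the paper writes out the $r=2$ case explicitly before stating the inductive step.
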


\begin{proof}
Here we use proof by induction. Let $\Xb(s_0, (\ab)_n) = \{s_n|n\in \mathbb{N}\}$ be a random process with a fixed starting point $s_0$ where $(\ab)_n$ is a sequence of actions and $s_r \sim P_{\ab_r}(s_r|s_{r-1})$. We show that 

\begin{flalign*}
    P(s_2|s_0, (\ab)_n) &= \int_{s_1\in S} P_{\ab_2}(s_2|s_1) P_{\ab_1}(s_1 | s_0) ds_1 \\
    &= \int_{s_1\in S} \left(\sum_{k,j=1}^\infty  \phi_k(s_2) \Zmat^{(\ab_2)}_{kj} \phi_j(s_1)\right) \left(\sum_{m,\ell=1}^\infty  \phi_m(s_1) \Zmat^{(\ab_1)}_{m\ell} \phi_\ell(s_0)\right) ds_1 \\
    &=  \sum_{k,j=1}^\infty  \phi_k(s_2) \Zmat^{(\ab_2)}_{kj} \sum_{m,\ell=1}^\infty \Zmat^{(\ab_1)}_{m\ell} \phi_\ell(s_0)  \int_{s_1\in S}  \phi_j(s_1)   \phi_m(s_1)  ds_1 \\
    &= \phi(s_2) \Zmat^{(\ab_2)} \Zmat^{(\ab_1)} \phi(s_0),
\end{flalign*}

where the last step follows since the integral on line three equals one if $j=m$ and zero otherwise. We note that the interchange of the integral and infinite summation is justified by Section 3.7 in \cite{benedetto2009integration}, since the coefficients $\Zmata_{ij}$ are absolutely summable, hence the infinite sum is upper bounded by a constant for all $s_2,s_1,s_0$. If there is an infinite matrix $\Mcal$ such that $P(s_r|s_0, (\ab)_n) = \sum_{m,\ell=1}^\infty  \phi_m(s_r) \Mcal_{m\ell} ~\phi_\ell(s_0)$, then
\begin{flalign*}
    P(s_{r+1}|s_0, (\ab)_n) &= \int_{s_r\in S} P_{\ab_{r+1}}(s_{r+1}|s_r) P(s_r|s_0, (\ab)_n) ds_r \\
    &= \int_{s_r\in S} \left(\sum_{k,j=1}^\infty  \phi_k(s_{r+1}) \Zmat^{(\ab_{r+1})}_{kj} \phi_j(s_r)\right) \left(\sum_{m,\ell=1}^\infty  \phi_m(s_r) \Mcal_{m\ell} \phi_\ell(s_0)\right) ds_r \\
    &=  \sum_{k,j=1}^\infty  \phi_k(s_{r+1}) \Zmat^{(\ab_{r+1})}_{kj} \sum_{m,\ell=1}^\infty \Mcal_{m\ell} \phi_\ell(s_0)  \int_{s_r\in S}  \phi_j(s_r)   \phi_m(s_r)  ds_r \\
    &= \phi(s_{r+1}) \Zmat^{(\ab_{r+1})} \Mcal \phi(s_0).
\end{flalign*}

We can then conclude the statement of the lemma by induction.

\end{proof}

\subsection{Proof of Proposition \ref{prop:reward_existence}}
\begin{proof}
By Lemma \ref{lemma:supp}, given a fixed sequence of actions $(\ab)_n$, the $r$-th state $s_r$ under this sequence of actions starting from state $s_0$ has a distribution that can be represented over the basis $\{\phi_n(s)\}$.  In other words, there exists an infinite matrix $\Mcal$ such that,
\begin{equation*}
    P(s_r| s_0, (\ab)_n) = \sum_{i,j=1}^\infty \phi_i(s_r) \Mcal_{ij} \phi_j(s_0).
\end{equation*}
We now define the projection $R'(s)$ of reward function $R(s)$ to the span of $\{\phi_n(s)\}$.  Let $R'(s) = \sum_{k=1}^\infty \alpha_k \phi_k(s)$ such that,
\begin{equation*}
    \alpha_k = \int_{s\in S} \phi_k(s) R(s) ds
\end{equation*}

For any $r \in \mathbb{N}$, $r>0$, starting state $s_0$, and action sequence $(\ab)_n$, we can write
\begin{align*}
    E[R(s_r)| s_0, (\ab_n)] 
    &= \int_{s_r \in S} R(s_r) P(s_r | s_0, (\ab)_n ) ds_r \\
    &= \int_{s_r \in S} R(s_r) \left(\sum_{i,j=1}^\infty \phi_i(s_r) \Mcal_{ij} \phi_j(s_0)\right) ds_r \\
    &= \sum_{i,j=1}^\infty \Mcal_{ij} \phi_j(s_0) \int_{s_r \in S} R(s_r) \phi_i(s_r) ds_r \\
    &= \sum_{i,j=1}^\infty \Mcal_{ij} \phi_j(s_0) \alpha_i \\
    &= \sum_{i,j=1}^\infty  \Mcal_{ij} \phi_j(s_0) \int_{s\in S}  \alpha_i \phi_i(s_r) \phi_i(s_r) ds_r \\
    &= \int_{s\in S}  \left(\sum_{k=1}^\infty \alpha_k \phi_k(s_r) \right) \left( \sum_{i,j=1}^\infty \phi_i(s_r)   \Mcal_{ij} \phi_j(s_0) \right)  ds_r \\
    &= \int_{s_r \in S} R'(s_r) P(s_r | s_0, (\ab)_n ) ds_r \\
    &= E[R'(s_r)| s_0, (\ab_n)].
\end{align*}

Therefore, the expected reward under any sequence of actions for reward $R$ is the same as for the projected reward $R'$ for any state $s_r$ where $r>0$. The reward at the starting state, $R(s_0)$ does not depend on the policy.  Therefore, the value of $R(s_0)$ does not change whether a policy is optimal or not. We can conclude that, regarding optimality of $\pi$, reward function $R$ and $R'$ are equivalent.
\end{proof}

\subsection{Proof of Lemma \ref{lemma:bellman_criteria}}
\begin{proof}
For notational simplicity, let $\Tmat \equiv \Zmat^{(a_1)}$. Recall that the value function is defined as, 
\begin{equation*}
    V^\pi(s_0) = R(s_0) + \sum_{r=1}^\infty \gamma^r E[R(s_r)|s_0,\pi]
\end{equation*}

As motivated by Proposition 1, we only consider reward function $R$ which is a (potentially infinite) sum of basis functions $\phi_n(s)$. Since in our case $\pi \equiv a_1$, we can apply Lemma \ref{lemma:supp} with action sequence $(\ab)_n$ such that $\ab_n = a_1$ for all $n \in \mathbb{N}$ , then
\begin{flalign*}
    V^\pi(s_0) &= R(s_0) + \sum_{r=1}^\infty \gamma^r E[R(s_r)|s_0,\pi] \\
    &= R(s_0) + \sum_{r=1}^\infty \gamma^r \int_{s_r\in S} R(s_r) P(s_r|s_0) ds_r \\ 
    &= \alpha^\top\phi(s_0) + \sum_{r=1}^\infty \gamma^r \alpha^\top \Tmat^r \phi(s_0) \\
    &= \sum_{r=0}^\infty \gamma^r \alpha^\top \Tmat^r \phi(s_0)
\end{flalign*}

We use the convention that $[\Tmat^0]_{ij}=1$ when $i=j$ and $[\Tmat^0]_{ij}=0$ otherwise.   Now, define action sequence $(\ab)_n$ such that $\ab_1=a$ and $\ab_n=a_1$ for all $n > 1$. Let $\Xb(s_0, (\ab)_n) = \{s_n|n\in \mathbb{N}\}$ be a random process with a fixed starting point $s_0$ where $(\ab)_n$ is a sequence of actions and $s_r \sim P_{\ab_r}(s_r|s_{r-1})$. This expected discounted reward of this random process with parameters $s_0$ and $a$ equals the function $Q^\pi(s_0, a)$, which is equivalent to the following,
\begin{flalign*}
    Q^\pi(s_0, a) &= R(s_0) + \gamma E[V^\pi(s_1) | s_0, a] \\
    &= \alpha^\top\phi(s_0) + \gamma \int_{s_1\in S} V^\pi(s_1) P_a(s_1|s_0) ds_1 \\
    &= \alpha^\top\phi(s_0) + \gamma\left(\sum_{r=0}^\infty \gamma^r \alpha^\top \Tmat^r  \right) \Zmat^{(a)}\phi(s_0) \\
    &= \alpha^\top\phi(s_0) + \sum_{r=0}^\infty \gamma^{r+1} \alpha^\top \Tmat^r  \Zmat^{(a)}\phi(s_0).
\end{flalign*}

Therefore, the strict Bellman optimality condition can be written as, for all $a\in A\setminus\{\ab_1\}$,
\begin{gather*}
    Q^\pi(s_0,a_1) - Q^\pi(s_0, a) > 0  \\
    \iff \alpha^\top\phi(s_0) + \sum_{r=0}^\infty \gamma^{r+1} \alpha^\top \Tmat^r  \Zmat^{(a_1)}\phi(s_0) - \alpha^\top\phi(s_0) + \sum_{r=0}^\infty \gamma^{r+1} \alpha^\top \Tmat^r  \Zmat^{(a)}\phi(s_0)  > 0  \\
    \iff \sum_{r=0}^\infty \gamma^r \alpha^\top \Tmat^r  \Tmat\phi(s_0) - \sum_{r=0}^\infty \gamma^r \alpha^\top \Tmat^r  \Zmat^{(a)}\phi(s_0) > 0 \\
    \iff \sum_{r=0}^\infty \gamma^r \alpha^\top \Tmat^r  (\Tmat-\Zmat^{(a)})\phi(s_0) > 0 \\
    \iff \alpha^\top \left(\sum_{r=0}^\infty \gamma^r  \Tmat^r  (\Tmat-\Zmat^{(a)}) \right) \phi(s_0) > 0
\end{gather*}

Note that the last step interchanges the explicit and implicit summations by again using absolute summability of $\alpha$ and $\Fmata$.   Substituting the definition of $\Fmata$ into the last line allows us to conclude the following. 
\begin{equation*}
    \alpha^\top \Fmat^{(a)} \phi(s) > 0
    ~~~~~\forall~s\in S, ~a\in A \setminus \{a_1\}.
\end{equation*}
    
\end{proof}

\subsection{Proof of Theorem \ref{thm:estimate}}

\begin{proof}
For a fixed action $a$, let $\Dcal$ be a joint distribution of $(s', \sbar)$ such that $\sbar$ is distributed uniformly over $S$ and, given $\sbar$, $s' \sim P_a(\cdot | \sbar)$.  In words, we sample the starting state $\sbar$ from a uniform distribution and then sample the one-step transition $s'$.  Note that, 
\begin{align*}
    \underset{\mathcal{D}}{E}&[\phi_i(s') \phi_j(\sbar)] 
    = \int_{-1}^1 \int_{-1}^1 P_a(s'|\sbar)P_a(\sbar)~ \phi_i(s')\phi_j(\sbar) ds' d\sbar \\
    &= \frac{1}{2} \int_{-1}^1 \int_{-1}^1 P_a(s'|\sbar)~ \phi_i(s')\phi_j(\sbar) ds' d\sbar \\ 
    &= \frac{1}{2} \int_{-1}^1 \int_{-1}^1 \left(\sum_{p,q=1}^\infty \phi_p(s') \Zmata_{pq} \phi_q(\sbar) \right) \phi_i(s)\phi_j(\sbar) ds' d\sbar \\
    &= \frac{1}{2} \int_{-1}^1 \int_{-1}^1 \Zmata_{ij}~ \phi_i(s')^2 \phi_j(\sbar)^2 ds' d\sbar \\ 
    &= \frac{1}{2}\Zmata_{ij}.
\end{align*}
Therefore, $\Zhmata = \frac{2}{n}\sum_{r=1}^n \phi(s'_r)\phi(\sbar_r)^\top$ computed by Algorithm \ref{algo:estimate_Z} is an unbiased estimator of $\Zmata$ thus $[{^k\Zhmata}]$ is an unbiased estimator of $[{^k\Zmata}]$. We drop the superscript $(a)$ in the rest of the proof, since the same argument applies for all $a\in A$.

We can then use Hoeffding's inequality to get concentration of measure for each entry in the truncated matrix. 
Since $\phi_i(s')\phi_j(\sbar) \in [-1, 1]$, we have
\[
    P\left(\left|\frac{2}{n}\sum_{r=1}^n \phi_i(s_r') \phi_j(\sbar_r) - \Ztrunc_{ij}\right| \geq \epsilon \right) \leq 2e^{\frac{-n\epsilon^2}{8}}.
\]
Then we can use subadditivity of measure to bound the maximum difference across all entries of $\Ztrunc$.
\[
    P\Big(\max_{i,j \in \mathbb{N}} \big|\frac{2}{n}\sum_{r=1}^n [\phi(s_r') \phi(\sbar_r)]_{ij} - \Ztrunc_{ij} \big| \geq \epsilon \Big) \leq 2k^2e^{\frac{-n\epsilon^2}{8}}.
\]
Lastly, since we want a bound on the error of the induced infinity norm and not element wise error, we use the following
\[
    \|\Ztrunc\|_\infty \leq k \max_{i,j\in \mathbb{N}} \left|\Ztrunc_{ij} \right|.
\]
Therefore, the induced infinity norm error of $\Zhmat$ is less than $\epsilon$ if the element wise error is less than $\epsilon/k$. That is,
\begin{align*}
    P\left( \Big\|\frac{2}{n}\sum_{r=1}^n \phi(s_r') \phi(\sbar_r) - \Ztrunc \Big\|_\infty \geq \epsilon \right)
    &\leq P\left(\max_{i,j \in \mathbb{N}} \left|\frac{2}{n}\sum_{r=1}^n [\phi(s_r') \phi(\sbar_r)]_{ij} - \Ztrunc_{ij}\right| \geq \frac{\epsilon}{k} \right) \\
    &\leq 2k^2e^{\frac{-n\epsilon^2}{8k^2}}.
\end{align*}
Now we find the sample complexity for fixed $\delta$ and $\epsilon$.  This is equivalent to $\delta \geq 2k^2 e^{\frac{-n\epsilon^2}{8k^2}}$, which is also equivalent to $n \geq \frac{8k^2}{\epsilon^2} \log(\frac{2k^2}{\delta}).$

Then, we can bound $\|\Zhmat - \Zmat\|_\infty$ by the triangle inequality with high probability,
\begin{flalign*}
    \|\Zhmat - \Zmat\|_\infty &= \|\Zhmat - (\Zcomp + \Ztrunc)\|_\infty \\
    &\leq \|\Zcomp\|_\infty + \|\Zhmat - \Ztrunc\|_\infty \\
    &\leq 2\epsilon,
\end{flalign*}
which concludes our proof.
\end{proof}

\subsection{Proof of Lemma \ref{lemma:calculate_F}}

\begin{proof} For notational simplicity, we drop the superscript $(a)$ and denote $\Tmat \equiv \Zmat^{(a_1)}$. 
Recall that,
\begin{equation*}
    \Fmat =\sum_{r=0}^\infty \gamma^r \Tmat^r  (\Tmat-\Zmat).
\end{equation*}
 First we bound the error of $\Thmat^{r+1}$ as follows.
\begin{align*}
    \|\Thmat^{r+1} &- 
    \Tmat^{r+1}\|_\infty \\
    &=\|\Thmat^r\Thmat-\Tmat^r\Tmat\|_\infty \\
    &=\|\Thmat^r\Thmat-\Thmat^r\Tmat+\Thmat^r\Tmat-\Tmat^r\Tmat\|_\infty \\
    &\leq \|\Thmat^r\Thmat-\Thmat^r\Tmat\|_\infty + \|\Thmat^r\Tmat-\Tmat^r\Tmat\|_\infty \\ 
    &\leq \|\Thmat^r\|_\infty \|\Thmat-\Tmat\|_\infty + \|\Tmat\|_\infty  \|\Thmat^r - \Tmat^r\|_\infty \\
    &\leq\|\Thmat\|^r_\infty \|\Thmat-\Tmat\|_\infty + \|\Tmat\|_\infty  \|\Thmat^r - \Tmat^r\|_\infty \\
    &\leq \Delta^r\epsilon + \Delta \|\Thmat^r - \Tmat^r\|_\infty.
\end{align*}
Define the recurrence relation $\Tb(r+1) = \Delta^r\epsilon + \Delta \Tb(r)$ and $\Tb(1) = \epsilon$. Solving this recursive formula gives the following bound,
\begin{equation*}
    \|\Thmat^{r+1}-\Tmat^{r+1}\|_\infty
    \leq \Tb(r+1) \leq (r+1)\epsilon\Delta^r.
\end{equation*}
We can use the inequality above to prove the bound on the induced infinity norm error of $\Fhmat$ as follows. 
\begin{align*}
    \|\Fhmat &-\Fmat\|_\infty 
    = \big\|\sum_{r=0}^\infty \gamma^r[\Thmat^r(\Thmat - \Zhmat) - \Tmat^r(\Tmat-\Zmat)]\big\|_\infty \\
    &= \sum_{r=0}^\infty \gamma^r \|\Thmat^{r+1}-\Thmat^r\Zhmat - \Tmat^{r+1} + \Tmat^r\Zmat\|_\infty \\
    &\leq \sum_{r=0}^\infty \gamma^r (\|\Thmat^{r+1} - \Tmat^{r+1} \|_\infty + \|\Thmat^r\Zhmat  -\Tmat^r\Zmat\|_\infty) \\
    &\leq \sum_{r=0}^\infty \gamma^r (\epsilon(r+1)\Delta^r + \|\Thmat^r\|_\infty \|\Zhmat-\Zmat\|_\infty  \\
    & ~~~~~~~~~~~~~~~~~ + \|\Thmat^r-\Tmat^r\|_\infty \|\Zmat\|_\infty)\\
    &\leq \sum_{r=0}^\infty \gamma^r (\epsilon(r+1)\Delta^r + \Delta^r\epsilon+r \Delta^r \epsilon)\\
    &\leq \epsilon \left(\sum_{r=0}^\infty 2\gamma^r (r+1) \Delta^r \right) \\
    &\leq 2\epsilon \left(\frac{1}{1-\gamma\Delta} + \frac{\gamma\Delta}{(1-\gamma\Delta)^2} \right) \\
    &\leq 2\epsilon \frac{1}{(1-\gamma \Delta)^2}.
\end{align*}

Where the second to last step follows from applying the standard formula for solving arithmetico-geometric series. 
\end{proof}

\subsection{Proof of Lemma \ref{lemma:correctness_condition}}

\begin{proof}
The basis functions $\{\phi_n(s)\}$ are assumed to be continuously differentiable; the trigonometric basis is an example of such basis. Therefore, $\alphahat^\top \Fmat \phi(s)$ is $\rho$-Lipschitz if the absolute value of its derivative is bounded by $\rho$, i.e.
\begin{gather*}
    |\alphahat^\top \Fmat \phi'(s)| \leq \rho,
    ~~~\forall~s\in S.
\end{gather*}
Since there exists $\sbar \in \Sbar$ for all $s\in S$ such that $|s-\sbar| < c$, if $\alphahat^\top \Fmat \phi(s)$ is $\rho$-Lipschitz and $\alphahat^\top \Fmat \phi(\sbar) \geq c\rho$ then,
\begin{gather*}
    \alphahat^\top \Fmat \phi(\sbar) \geq c \rho , ~~~\forall~\sbar\in\Sbar,
\end{gather*}    
implies that for every $s \in S$ there exists an $\sbar \in \Sbar$ such that,
\begin{gather*}
    \alphahat^\top \Fmat \phi(s) 
    \geq \alphahat^\top \Fmat \phi(\sbar) - c \rho , \\
    \Rightarrow \alphahat^\top \Fmat \phi(s) \geq 0,  ~~~\forall s\in S.
\end{gather*}

We now bound $\|\alphahat\|_1$ in order to assess the maximum effect the error in estimating $\Fmat$ can have on the Bellman Optimality Criteria $\alphahat^\top \Fmat \phi(s)$. Since the IRL problem is $\beta$-separable, there exists $\alphaopt$ such that $\|\alphaopt\|_1=1$ and $\alphaopt^\top \Fmat \phi(s) \geq \beta$ for all $s\in S$.

Let $G>0$ and $\alphatilde = G \alphaopt$.  Then $\|\alphatilde\|_1  = G$ and
\begin{flalign*}
    \alphatilde^\top \Fhmat \phi(\sbar) 
    &=\alphatilde^\top (\Fhmat-\Fmat + \Fmat) \phi(\sbar) \\
    &= \alphatilde^\top\Fmat\phi(\sbar) + \alphatilde^\top (\Fhmat-\Fmat) \phi(\sbar) \\
    &\geq \alphatilde^\top\Fmat\phi(\sbar) - \|\alphatilde^\top\|_1 \|(\Fhmat-\Fmat) \phi(\sbar) \|_\infty \\
    &\geq G(\beta - \epsilon).
\end{flalign*}

Therefore, if $G = 1/(\beta - \epsilon)$ then $\alphatilde^\top \Fhmat \phi(\sbar) \geq 1$ and thus $\|\alphahat\|_1 \leq 1/(\beta - \epsilon)$.  Therefore,
\begin{flalign*}
    \alphahat^\top \Fmat \phi(\sbar) 
    &= \alphahat^\top (\Fmat-\Fhmat) \phi(\sbar) + \alphahat^\top \Fhmat \phi(\sbar) \\
    &\geq \alphahat^\top \Fhmat \phi(\sbar) - \|\alphahat\|_1 \|(\Fmat - \Fhmat) \phi(\sbar)\|_\infty \\
    &\geq 1 - \frac{\epsilon}{\beta - \epsilon}.
\end{flalign*}

We can upper bound $|\alphahat^\top \Fmat \phi'(s)|$ as follows,
\begin{flalign*}
    |\alphahat^\top \Fmat \phi'(s)| 
    \leq \|\alphahat\|_1 \|\Fmat \phi'(s)\|_\infty
    \leq \frac{\rho}{\beta-\epsilon}.
\end{flalign*}

Then, we can guarantee that $\alphahat^\top \Fmat \phi(s) > 0$ for all $s\in S$ if,
\[
    1 - \frac{\epsilon}{\beta - \epsilon}
    > \frac{c\rho}{\beta - \epsilon} 
    \iff \epsilon < \frac{\beta - c\rho}{2}.\]
\end{proof}

\subsection{Proof of Theorem \ref{thm:correctness}}

\begin{proof}
By Theorem \ref{thm:estimate}, condition $(iii)$ of Theorem \ref{thm:correctness} and the condition on $n$, calling Algorithm \ref{algo:estimate_Z} with parameters $(a, n, k)$ returns an estimate $\Zhmata$ such that with probability at least $1-\frac{\delta}{|A|}$,
\begin{equation*}
    \|\Zmata - \Zhmata\|_\infty 
    \leq \frac{\beta - c\rho}{4} \left(\frac{1}{2}-\gamma\Delta\right)^2 .
\end{equation*}
Applying the union bound guarantees that the above inequality holds for all $a \in A$ with probability at least $1-\delta$. 

Note that $\|\Zhmata\| \leq \Delta + \frac{1}{2\gamma}$ is satisfied for all $a\in A$ since $n \geq 32 k^2 \log \frac{2k^2|A|}{\delta}$. Then by Lemma \ref{lemma:calculate_F},
\begin{align*}
    \|\Fhmata - \Fmata\|_\infty 
    &< \frac{2(\beta - c\rho)}{4} \frac{(\frac{1}{2}-\gamma\Delta)^2}{(1-\gamma(\Delta + \frac{1}{2\gamma}))^2} \\
    &< \frac{\beta - c\rho}{2}, ~~\forall~a \in A \setminus \{a_1\}.
\end{align*}
The minimization problem at Step 4 of Algorithm \ref{algo:irl} fulfills the assumptions of Lemma \ref{lemma:correctness_condition}, and thus it returns $\alphahat$ such that,
\begin{equation*}
    \alphahat^\top \Fmat^{(a)} \phi(s) > 0, ~~~\forall ~s\in S, ~a\in A\setminus \{a_1\}.
\end{equation*}
Therefore, Algorithm \ref{algo:irl} will return a reward function such that $a_1$ is an optimal policy for the MDP with probability at least $1-\delta$.

The time complexity of Algorithm \ref{algo:irl} is dominated by solving the linear program. Since $\Fhmat$ has all zeros beyond the $k$-th column and row, each infinite-matrix $\Fhmat$ can be treated as a $k \times k$ matrix. Therefore, the constraints given by 
\begin{equation*}
    \alpha^\top\Fhmat^{(a)}\phi(\Bar{s}) \geq 1,
        ~~\forall \Bar{s}\in \Bar{S}, ~~~ a\in A \setminus \{a_1\}.
\end{equation*}
correspond to $|\Sbar|(|A|-1)$ constraints in $k$ variables.  Since $|\Sbar| = \lceil 2/c \rceil$, there are $\Ocal(\frac{k|A|}{c})$ constraints in total.
\end{proof}

\subsection{Proof of Theorem \ref{thm:d_dimensions}}

\begin{proof}

By the conditions of the Theorem, we have
$$P_a(s_1|s_0) = \prod_{j=1}^T P_a(s_1^{(j)} | s_0^{(j)}). $$

We use proof by induction to show that $P_a(s_r|s_0) = \prod_{j=1}^T P_a(s_r^{(j)} | s_0^{(j)})$ for all $r \in \mathbb{N}$. First, we prove the inductive step.
\begin{align*}
    P(s_{r+1}|s_0) &= \int_{s_{r} \in S} P(s_{r+1}|s_r) P(s_r|s_0) ds_r \\
    &= \int_{s_r^{(1)}...s_r^{(T)}} \left( \prod_{j=1}^T P(s_{r+1}^{(j)}|s_r^{(j)}) \right) \left( \prod_{j=1}^T P(s_r^{(j)}|s_0^{(j)}) \right)  ds_r^{(1)}...ds_r^{(T)} \\
    &= \prod_{j=1}^T \int_{s_r^{(j)}} P(s_{r+1}^{(j)}|s_r^{(j)})  P(s_r^{(j)}|s_0^{(j)}) ds_1^{(j)} \\
    &= \prod_{j=1}^T P(s_{r+1}^{(j)} | s_0^{(j)}).
\end{align*}
The base case of $r=1$ is guaranteed by the conditions of the Theorem.  Therefore, we conclude that $P_a(s_r|s_0) = \prod_{j=1}^T P_a(s_r^{(j)} | s_0^{(j)})$ for all $r \in \mathbb{N}$.

Next, we use the previous decomposition of $P_a(s_r|s_0)$ to rewrite the Bellman Optimality condition.  Recall that $R$ is strictly Bellman-optimal if for all $a \in A \setminus \{a_1\}$ and $s_0 \in S$,
\begin{gather*}
    E[V^\pi(s_1) | s_0, a_1] > E[V^\pi(s_1) | s_0, a].
\end{gather*}

We can rearrange each side of the above inequality by the following.
\begin{align*}
    E[V^\pi(s_1) | s_0, a]
    &= \sum_{r=1}^\infty \gamma^r \int_{s_r \in S} R(s_r) P_{a_1}(s_r|s_1) P_a(s_1|s_0) Ts_r \\
    &= \sum_{r=1}^\infty \gamma^r \int_{s_r^{(1)}...s_r^{(T)}} \left(\sum_{i=1}^T R^{(i)}(s^{(i)})\right) \left( \prod_{j=1}^T P_{a_1}(s_r^{(j)} | s_1^{(j)}) P_a(s_1^{(j)} | s_0^{(j)}) \right) ds_r^{(1)}...ds_r^{(T)} \\
    &= \sum_{r=1}^\infty \gamma^r \int_{s_r^{(1)}...s_r^{(T)}} \sum_{i=1}^T \left( R^{(i)}(s^{(i)}) \prod_{j=1}^T  P_{a_1}(s_r^{(j)} | s_1^{(j)}) P_a(s_1^{(j)} | s_0^{(j)}) \right) ds_r^{(1)}...ds_r^{(T)} \\
    &= \sum_{r=1}^\infty \gamma^r \sum_{i=1}^T \int_{s_r^{(1)}...s_r^{(T)}}  R^{(i)}(s^{(i)}) \prod_{j=1}^T  P_{a_1}(s_r^{(j)} | s_1^{(j)}) P_a(s_1^{(j)} | s_0^{(j)}) ds_r^{(1)}...ds_r^{(T)} \\
    &= \sum_{r=1}^\infty \gamma^r \sum_{i=1}^T \int_{s_r^{(i)}} R^{(i)}(s^{(i)})  P_{a_1}(s_r^{(i)} | s_1^{(i)}) P_a(s_1^{(i)} | s_0^{(i)}) ds_r^{(i)} \\
    &= \sum_{i=1}^T  \sum_{r=1}^\infty \gamma^r \int_{s_r^{(i)}} R^{(i)}(s^{(i)})  P_{a_1}(s_r^{(i)} | s_1^{(i)}) P_a(s_1^{(i)} | s_0^{(i)}) ds_r^{(i)}.
\end{align*}
The second-to-last step above follows since the integral of any probability distribution equals 1.

Since each 1-dimensional IRL problem is solved by $R^{(j)}(s^{(j)})$,
\begin{gather*}
    \sum_{r=1}^\infty \gamma^r \int_{s_r^{(j)}} R^{(j)}(s_r^{(j)}) P_{a_1}(s_r^{(j)}|s_1^{(j)}) P_{a_1}(s_1^{(j)}|s_0^{(j)}) ds_r^{(j)}
    \\
    ~~~~~~~~~~~~~~~ > \sum_{r=1}^\infty \gamma^r \int_{s_r^{(j)}} R(s_r^{(j)}) P_{a_1}(s_r^{(j)}|s_1^{(j)}) P_a(s_1^{(j)}|s_0^{(j)}) ds_r^{(j)}.
\end{gather*}

This implies that,
\begin{gather*}
    \sum_{i=1}^T \sum_{r=1}^\infty \gamma^r \int_{s_r^{(j)}} R^{(j)}(s_r^{(j)}) P_{a_1}(s_r^{(j)}|s_1^{(j)}) P_{a_1}(s_1^{(j)}|s_0^{(j)}) ds_r^{(j)}
    \\
    ~~~~~~~~~~~~~~~ > \sum_{i=1}^T \sum_{r=1}^\infty \gamma^r \int_{s_r^{(j)}} R(s_r^{(j)}) P_{a_1}(s_r^{(j)}|s_1^{(j)}) P_a(s_1^{(j)}|s_0^{(j)}) ds_r^{(j)} \\
    \Rightarrow \\
    E[V^\pi(s_1) | s_0, a_1] > E[V^\pi(s_1) | s_0, a].
\end{gather*}
Therefore, reward function $R(s)$ is strictly Bellman optimal for the total IRL problem. 

\end{proof}

\section{Fourier instantiation}
\label{app:fourier_instantiation}

We prove Lemma \ref{lemma:fourier_deriv_bounds} by splitting the derivations into two lemmas. First we state stronger conditions on the infinite-matrix $\Zmat$ which are sufficient to satisfy the representation assumptions of Theorem \ref{thm:correctness}.  These stronger conditions will be easier to verify using known proof techniques for Fourier series. 

\begin{lemma} \label{lemma:fourier_matrix}
Let $0 < \Delta < \frac{1}{2\gamma}$. If,
\begin{equation*}
    |\Zmat_{ij}| < \frac{\Delta}{\zeta(3)i^3j^3},
    \quad \text{and} \quad
    |\phi'_n(s)| \leq Cn, ~~C>0,
\end{equation*}
then
\begin{gather*}
    \|\Fmat \phi'(s)\|_\infty < \frac{4 C\Delta\zeta(2)}{\zeta(3)},
    ~~~~~ \|\Zcomp \|_\infty < \epsilon,
    ~~~~~ \|\Zmat\|_\infty < \Delta,
\end{gather*}
with truncation parameter $k$ on the order of $k \in \Ocal \left (  \sqrt{\frac{\Delta}{\epsilon} } \right)$, where $\zeta(r)= \sum_{n=1}^\infty \frac{1}{n^r}$ is the Riemann zeta function and $\epsilon > 0$.
\end{lemma}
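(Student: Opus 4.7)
The plan is to derive all three bounds directly from the entry-wise decay hypothesis $|\Zmat_{ij}| < \Delta/(\zeta(3) i^3 j^3)$, exploiting the fact that this decay structure is preserved (up to constants) under matrix multiplication because the ``contracted'' index produces a $\sum_k 1/k^6 = \zeta(6)$ sum. Two of the three bounds, (ii) and (iii), follow almost directly; the work is concentrated in bound (i) on $\|\Fmat\phi'(s)\|_\infty$, which requires an induction on the power $r$ in the definition of $\Fmat$.

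For (iii), I would bound the $i$-th row sum by $\sum_j |\Zmat_{ij}| < \frac{\Delta}{\zeta(3) i^3}\sum_j j^{-3} = \Delta/i^3 \leq \Delta$, proving $\|\Zmat\|_\infty < \Delta$. For (ii), I would split the rows of $\Zcomp$ into $i\le k$ and $i>k$. In the first case only the tail $\sum_{j>k} j^{-3}$ contributes, and I would bound this by the integral $\int_k^\infty x^{-3}dx = 1/(2k^2)$, yielding a row sum of order $\Delta/(\zeta(3) k^2 i^3)$. In the second case the entire row contributes, giving a row sum $\leq \Delta/(k+1)^3$. The dominant term is $\Ocal(\Delta/k^2)$, so requiring it to be at most $\epsilon$ gives the stated $k \in \Ocal(\sqrt{\Delta/\epsilon})$.

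The main work is bound (i). I would first establish by induction on $r\geq 0$ the entry-wise estimate $|[\Tmat^r(\Tmat-\Zmat)]_{ij}| \leq c_r/(i^3j^3)$, with $c_0 = 2\Delta/\zeta(3)$ (immediate from the triangle inequality and $|\Zmat_{ij}|,|\Tmat_{ij}| < \Delta/(\zeta(3)i^3j^3)$) and recurrence $c_{r+1} = (\Delta\zeta(6)/\zeta(3))\,c_r$. The inductive step comes from expanding the product: $\sum_k |\Tmat_{ik}|\cdot|[\Tmat^r(\Tmat-\Zmat)]_{kj}| \leq \frac{\Delta c_r}{\zeta(3)i^3 j^3}\sum_k k^{-6} = \frac{\Delta c_r \zeta(6)}{\zeta(3)i^3j^3}$. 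Solving the recurrence gives $c_r = (2\Delta/\zeta(3))(\Delta\zeta(6)/\zeta(3))^r$, which I then sum against $\gamma^r$ using the definition of $\Fmat$ to obtain $|\Fmat_{ij}| \leq \frac{1}{i^3j^3}\sum_{r\geq 0}\gamma^r c_r$. Finally, I would multiply by $|\phi'_j(s)| \leq Cj$ and sum in $j$ using $\sum_j j^{-2} = \zeta(2)$ to conclude $|[\Fmat\phi'(s)]_i| \leq (4C\Delta\zeta(2)/\zeta(3))/i^3$, which is at most $4C\Delta\zeta(2)/\zeta(3)$ for every $i$.

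The main obstacle, and the place where the hypothesis $\Delta<1/(2\gamma)$ enters, is verifying that the geometric ratio $\gamma\Delta\zeta(6)/\zeta(3)$ is strictly less than $1/2$ so that $\sum_{r\geq 0}\gamma^r c_r < 2c_0 = 4\Delta/\zeta(3)$; this uses both the bound on $\Delta$ and the monotonicity $\zeta(6) < \zeta(3)$. Beyond this, the only delicate point is that the $1/(i^3 j^3)$ decay is exactly preserved by $\Tmat$-multiplication (the contracted index creates a $\zeta(6)$ factor rather than degrading the $i$ or $j$ exponent); all remaining steps are straightforward zeta-function bookkeeping.
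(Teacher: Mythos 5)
Your proof is correct, but it takes a genuinely different route from the paper's on the main bound. For $\|\Fmat\phi'(s)\|_\infty$ the paper never tracks entrywise decay through the powers: it factors the norm as $\|\Fmat\phi'(s)\|_\infty \le \bigl(\sum_{r\ge 0}\gamma^r\|\Tmat^r\|_\infty\bigr)\,\|(\Tmat-\Zmat)\phi'(s)\|_\infty$, applies the decay hypothesis exactly once to get $\|(\Tmat-\Zmat)\phi'(s)\|_\infty < 2C\Delta\zeta(2)/\zeta(3)$, and uses $\|\Tmat^r\|_\infty\le\Delta^r$ with $\Delta<\frac{1}{2\gamma}$ to bound the geometric series by $2$ --- so the hypothesis on $\Delta$ enters only through submultiplicativity, with no induction. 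Your inductive estimate $|[\Tmat^r(\Tmat-\Zmat)]_{ij}|\le c_r/(i^3j^3)$ with ratio $\Delta\zeta(6)/\zeta(3)$ is valid (the contracted index indeed yields $\zeta(6)$, and $\zeta(6)<\zeta(3)$ gives $\gamma\Delta\zeta(6)/\zeta(3)<\frac12$, so $\sum_r\gamma^r c_r < 4\Delta/\zeta(3)$), and it buys something strictly stronger than the paper proves: entrywise decay $|\Fmat_{ij}| < 4\Delta/(\zeta(3)\,i^3j^3)$ of $\Fmat$ itself, potentially useful for truncating $\Fmat$, at the cost of extra bookkeeping and the (easy but essential) observation $\zeta(6)<\zeta(3)$. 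For $\|\Zcomp\|_\infty$, you replace the paper's Hurwitz-zeta/digamma machinery (which yields $H_3(k+1)\le e^{2/(k+1)}/(2(k+1)^2)$) with the elementary integral comparison $\sum_{j>k}j^{-3}\le 1/(2k^2)$ --- same order $\Ocal(\Delta/k^2)$ and hence the same $k\in\Ocal(\sqrt{\Delta/\epsilon})$, with less machinery; the $\|\Zmat\|_\infty<\Delta$ bound is identical in both. Incidentally, your final constant $4C\Delta\zeta(2)/\zeta(3)$ matches the lemma statement, whereas the paper's last display carries a spurious $\gamma$ in the denominator, evidently a typo.
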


\begin{proof}
By the condition of the lemma, $|\Zmat_{ij}| < \frac{\Delta}{ \zeta(3) i^3 j^3}$. First we bound, $\|\Zmat\|_\infty$,
\begin{align*}
    \|\Zmat\|_\infty
    &= \max_{i \in \mathbb{N}} \sum_{j=1}^\infty |\Zmat_{ij}| \\
    &< \max_{i \in \mathbb{N}} \sum_{j=1}^\infty \frac{\Delta}{\zeta(3) i^3 j^3} \\
    &= \frac{\Delta}{ \zeta(3)}\sum_{j=1}^\infty \frac{1}{j^3} \\
    &= \frac{\zeta(3)\Delta}{ \zeta(3)} \\
    &= \Delta.
\end{align*}

Therefore, we can guarantee that $\|\Zmat\|_\infty < \Delta$. Next we bound $\|\Zcomp\|_\infty$. First, we define the Hurwitz zeta function, $H_s(x)$.
\begin{equation}
    H_s(x) = \sum_{n=0}^\infty \frac{1}{(n+x)^s}
\end{equation}

Recall that $\Zcomp_{ij} = 0$ for all $i,j \leq k$.  Therefore,
\begin{flalign*}
    ||\Zcomp||_\infty 
    &= \max \left\{ \max_{i\in \{1,...,k\}}
    \sum_{j=k+1}^\infty |\Zmat_{i,j}|,\;\; \max_{i\in \{k+1, k+2,...\}} \sum_{j=1}^\infty |\Zmat_{i,j}|\right\} \\
    &< \max \left\{ 
    \sum_{j=k+1}^\infty \frac{\Delta}{j^3 \zeta(3)},\;\; \sum_{j=1}^\infty \frac{\Delta}{(k+1)^3j^3\zeta(3)} \right\} \\
    &= \max \left\{ 
     \frac{\Delta}{\zeta(3)} \sum_{j=k+1}^\infty \frac{1}{j^3}, \;\; \frac{\Delta}{(k+1)^3\zeta(3)} \sum_{j=1}^\infty \frac{1}{j^3} \right\} \\
    &= \max \left\{ \frac{\Delta}{ \zeta(3)} H_3(k+1),\;\; \frac{\Delta}{(k+1)^3}\right\} 
\end{flalign*}

We can bound the Hurwitz zeta function by combining the following two inequalities where $\psi(x)$ represents the digamma function.  The first inequality comes from Theorem 3.1 in \cite{batir2008new} and the second comes from Equation 2.2 in \cite{alzer1997some}.
\begin{gather*}
    H_{s+1}(x) < \frac{1}{s} \exp(-s\psi(x)) \\
    \log x-\frac{1}{x}\leq \psi(x)\leq \log x-\frac{1}{2x}
\end{gather*}
This gives
\begin{align*}
    H_3(k+1) 
    &\leq \frac{1}{2} \exp\left(-2\left(\log (k+1) - \frac{1}{k+1}\right)\right) \\
    &= \frac{1}{2} e^{-2 \log (k+1)} e^{\frac{2}{k+1}}  \\
    &= \frac{1}{2(k+1)^2} e^\frac{2}{(k+1)}.
\end{align*}
We can then bound the truncation error as, 
\begin{align*}
    ||\Zcomp||_\infty 
    &< \max \left\{ \frac{\Delta}{ \zeta(3)} H_3(k+1), \frac{\Delta}{ (k+1)^3}\right\} \\
    &\leq \max \left\{ \frac{\Delta e}{2 \zeta(3) (k+1)^2}, \frac{\Delta}{ (k+1)^3}\right\}.
\end{align*}

Therefore, $k \in \Ocal(\sqrt{\Delta/\epsilon})$ is sufficient to guarantee that $||\Zcomp||_\infty < \epsilon$. Finally, we bound $\|\Fmat \phi'(s)\|_\infty$. Recall that,
\begin{gather*}
    \Fmat =\sum_{r=0}^\infty \gamma^r \Tmat^r  (\Tmat-\Zmat)
\end{gather*}
Then,
\begin{align*}
    \|\Fmat \phi'(s)\|_\infty
    &= \Big\|\left[ \sum_{r=0}^\infty \gamma^r \Tmat^r  (\Tmat-\Zmat) \right] \phi'(s) \Big\|_\infty \\
    &= \Big\| \sum_{r=0}^\infty \gamma^r \Tmat^r  (\Tmat-\Zmat) \phi'(s) \Big\|_\infty \\
    &\leq \sum_{r=0}^\infty \gamma^r \| \Tmat^r  (\Tmat-\Zmat) \phi'(s) \|_\infty \\
    &\leq \sum_{r=0}^\infty \gamma^r \| \Tmat^r\|_\infty  \|(\Tmat-\Zmat) \phi'(s) \|_\infty \\
    &\leq \|(\Tmat-\Zmat) \phi'(s) \|_\infty  \sum_{r=0}^\infty \gamma^r \| \Tmat^r\|_\infty  
\end{align*}

We bound $ \|(\Tmat-\Zmat) \phi'(s) \|_\infty$ and $\sum_{r=0}^\infty \gamma^r \| \Tmat^r\|_\infty$ separately. By the conditions of the lemma, $\|\phi'_n(s)\| \leq Cn$.  Then using the fact that $\|\Zmat\|_\infty, \|\Tmat\|_\infty < \Delta$.
\begin{align*}
    \|(\Tmat-\Zmat) \phi'(s) \|_\infty
    &= \max_{i \in \mathbb{N}} \left|\sum_{j=1}^\infty [\Tmat - \Zmat]_{ij} \phi'_j(s) \right| \\
    &< \max_{i \in \mathbb{N}} \left|\sum_{j=1}^\infty \frac{2\Delta}{\zeta(3) i^3 j^3} Cj \right| \\
    &= \frac{2C\Delta}{\zeta(3)}  \left|\sum_{j=1}^\infty \frac{1}{j^2} \right| \\
    &= \frac{2C\Delta\zeta(2)}{\zeta(3)}
\end{align*}

Then by the conditions of the lemma, $\|\Tmat\|_\infty < \Delta < \frac{1}{2\gamma}$, which implies,
\begin{align*}
    \sum_{r=0}^\infty \gamma^r \| \Tmat^r\|_\infty
    &\leq \sum_{r=0}^\infty \gamma^r \| \Tmat\|^r_\infty \\
    &< \sum_{r=0}^\infty \gamma^r \frac{1}{(2\gamma)^r} \\
    &= \frac{1}{1-\frac{1}{2}} = 2
\end{align*}
Combining these previous results gives,
\begin{gather*}
    \|\Fmat \phi'(s)\|_\infty
    \leq \|(\Tmat-\Zmat) \phi'(s) \|_\infty  \sum_{r=0}^\infty \gamma^r \| \Tmat^r\|_\infty  
    < \frac{4C\Delta\zeta(2)}{\gamma \zeta(3)}
\end{gather*}

\end{proof}

The next lemma states conditions on the partial derivatives of the transition functions so that the infinite-matrix representations over a trigonometric basis fulfill the conditions of Lemma \ref{lemma:fourier_matrix}.
\begin{lemma}\label{lemma:deriviative_conditions}
    If $P(\stilde|s) = \sum_{i,j=1}^\infty \phi_i(\stilde)\Zmat_{ij} \phi_j(s)$ where $\{\phi_n(s)\}$ are the trigonometric basis functions, and
    \[
        \left|\frac{\partial^6}{\partial \stilde^3 \partial s^3} P(\stilde|s)\right| < \frac{\pi^6\Delta}{\zeta(3)},
    \]
    then,
    \[
        |\Zmat_{ij}| < \frac{\Delta}{\zeta(3)i^3j^3}.
    \]
\end{lemma}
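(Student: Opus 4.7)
The plan is to identify $\Zmat_{ij}$ as a double Fourier coefficient of the transition density and then convert the assumed smoothness of $P$ into decay of these coefficients via iterated integration by parts. First I would multiply the assumed expansion $P(\stilde|s)=\sum_{p,q}\phi_p(\stilde)\Zmat_{pq}\phi_q(s)$ by $\phi_i(\stilde)\phi_j(s)$ and integrate over $[-1,1]^2$; orthonormality collapses the double sum and yields
\[
\Zmat_{ij} \;=\; \int_{-1}^{1}\!\int_{-1}^{1} P(\stilde|s)\,\phi_i(\stilde)\,\phi_j(s)\,d\stilde\,ds.
\]

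Next I would apply integration by parts three times with respect to $\stilde$ and three times with respect to $s$. Because the basis functions are $\cos(\omega s)$ or $\sin(\omega s)$ with $\omega\in\{\lfloor n/2\rfloor\pi,\,(n/2)\pi\}$, each integration by parts replaces the basis factor by its antiderivative (another basis function at the same frequency), picks up a factor $1/\omega$, and shifts one derivative onto $P$. After six such steps, $\Zmat_{ij}$ becomes $\pm\,\omega_i^{-3}\omega_j^{-3}$ times an integral of $\frac{\partial^6 P}{\partial\stilde^3\partial s^3}$ against a product of two bounded trigonometric functions. Using the hypothesis $\bigl|\partial^6 P/\partial\stilde^3\partial s^3\bigr|<\pi^6\Delta/\zeta(3)$, the fact that the integration domain has area $4$, and the relation $\omega_n\gtrsim n\pi/2$, the target bound $|\Zmat_{ij}|<\Delta/(\zeta(3)\,i^3 j^3)$ would follow, with the absolute constants absorbed into the constant appearing in the hypothesis on the sixth partial derivative.

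The main technical obstacle is showing that the boundary contributions from each integration by parts vanish. This holds because any function in the span of $\{\phi_n\}$ is periodic of period $2$ on $[-1,1]$, so $P(\cdot|s)$, $P(\stilde|\cdot)$, and each of their intermediate partial derivatives (which also admit trigonometric expansions term by term under the assumed smoothness) take matching values at $\pm 1$; the boundary terms of the form $[\,\cdots\,]\bigr|_{-1}^{+1}$ therefore telescope to zero at every step. A secondary concern is the frequency-zero modes at $i=1$ or $j=1$, where $\phi_n$ is constant and the integration by parts in that variable is vacuous: here the argument reduces to a direct sup bound on a lower-order mixed partial of $P$, which suffices because the claimed decay in that index is only $1/1^3=1$.
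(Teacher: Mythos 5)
Your overall route is the classical dual of the paper's argument rather than a different one: where you identify $\Zmat_{ij}=\int_{-1}^1\int_{-1}^1 P(\stilde|s)\phi_i(\stilde)\phi_j(s)\,d\stilde\,ds$ and push three derivatives in each variable onto $P$ by parts, the paper instead differentiates the double series term by term, so that $\partial^6 P/\partial\stilde^3\partial s^3$ has basis coefficients $\Zmat'_{i\pm1,j\pm1}=\pm\,\omega_i^3\omega_j^3\,\Zmat_{ij}$, and then bounds $|\Zmat'_{ij}|$ directly by the sup of the sixth partial. The two computations are the same modulo direction, and your version has the virtue of surfacing the boundary-term issue that term-by-term differentiation silently assumes. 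On constants you are both equally loose: the true frequencies are $\floor{n/2}\pi$ and $(n/2)\pi$, not $n\pi$ (the paper writes $\partial^3_s\phi_n=\pm(\pi n)^3\phi_{n\mp1}$, inflating by up to a factor $8$ per variable), so neither argument literally produces $\Delta/(\zeta(3)i^3j^3)$ with those exact constants; your remark that the absolute constants are ``absorbed into the constant in the hypothesis'' is not strictly available, since that constant is fixed, and the stated inequality only follows up to an absolute factor.

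Two of your patches, however, have genuine gaps. First, the boundary terms: membership of $P$ in the closed span of $\{\phi_n\}$ does force $P(1|s)=P(-1|s)$, since every $\phi_n$ matches at $\pm1$, but it does \emph{not} force the intermediate derivatives to match, and your parenthetical claim that they ``admit trigonometric expansions term by term under the assumed smoothness'' is circular --- term-by-term differentiability is precisely what is at stake. Concretely, $P(\stilde|s)=\frac12+\epsilon\,(\stilde^2-\frac13)\cos(\pi s)$ is a valid transition density lying in the span, with \emph{identically zero} mixed sixth partial, yet its coefficients in the $\stilde$-index decay only like $i^{-2}$ (the cosine coefficients of $\stilde^2$); here $\partial_\stilde(\stilde^2)=2\stilde$ fails to match at $\pm1$, the boundary terms do not vanish, and the claimed conclusion fails. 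So both your proof and the paper's implicitly need the stronger reading that the \emph{periodic extension} of $P$ is smooth (equivalently, that lower-order derivatives match at the endpoints); the paper never says this, and your argument does not supply it. Second, your zero-frequency patch is wrong as stated: for $j=1$ the conclusion still demands $i^{-3}$ decay, so you must still integrate by parts three times in $\stilde$, which requires a bound on $\partial^3_{\stilde}P$ --- and the hypothesis gives no such bound, because the mixed sixth partial annihilates every $s$-independent term (the same example above has vanishing hypothesis but unconstrained column-$1$-type behavior). The paper shares this border-index gap --- its index map $\Zmat'_{i\pm1,j\pm1}$ only covers $i,j\geq2$, and it waves this away with ``the exact mapping is not important'' --- but your explicit fix does not repair it, and as written it would fail.
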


\begin{proof}

We defined the basis of trigonometric functions $\{\phi_n(s)\}$ as
\begin{gather*}
    \phi_n(s) = \cos \left( \floor{n/2} \pi s \right)
    ~~~\text{for all odd }n, \\
    \phi_n(s) = \sin \left((n/2) \pi s \right)
    ~~~\text{for all even }n,
\end{gather*}
where $\floor{.}$ is the floor function.

First we directly represent the partial derivative in terms of the trigonometric basis.  We define $\Zmat_{ij}'$ as
\begin{flalign*}
    \Zmat'_{ij} \equiv \int_{s_1,s_0 \in S} \frac{\partial^6}{\partial s_1^3 \partial s_0^3} P(s_1|s_0)  \phi_i(s_1) \phi_j(s_0) ds_1ds_0 \\
    \Rightarrow \frac{\partial^6}{\partial s_1^3 \partial s_0^3} P(s_1|s_0) = \sum_{i,j=1}^\infty \phi_i(s_1) \Zmat'_{ij} \phi_j(s_0).
\end{flalign*}

We use the bound on the sixth order partial derivative to bound the values of $\Zmat'$ as follows.
\begin{align*}
    |\Zmat'_{ij}| 
    &= \left| \int_{s_1,s_0 \in S} \frac{\partial^6}{\partial s_1^3 \partial s_0^3} P(s_1|s_0)  \phi_i(s_1) \phi_j(s_0) ds_1ds_0  \right| \\
    &<\frac{\pi^6\Delta}{\zeta(3)} \left| \int_{s_1,s_0 \in S}   \phi_i(s_1) \phi_j(s_0) ds_1ds_0 \right| \\
    &\leq \frac{\pi^6\Delta}{\zeta(3)}.
\end{align*}

Next we represent the sixth order partial derivative using the entries of $\Zmat$ by differentiating the series representation of $P(s'|s)$.
\begin{gather*}
    P(s_1|s_0) = \sum_{i,j=1}^\infty \phi_i(s_1) \Zmat_{ij} \phi_j(s_0) \Rightarrow \\
     \frac{\partial^6}{\partial s_1^3 \partial s_0^3} P(s_1|s_0) 
     = \sum_{i,j=1}^\infty \frac{\partial^3}{\partial s_1^3 }\phi_i(s_1) \Zmat_{ij} \frac{\partial^3}{ \partial s_0^3}\phi_j(s_0).
\end{gather*}

If $n$ is odd then,
\begin{gather*}
    \frac{\partial^3}{ \partial s^3}\phi_n(s)
    = \frac{\partial^3}{ \partial s^3}\cos(\floor{n/2} \pi s) 
    = (\pi n)^3 \sin(\floor{n/2} \pi s)
    = (\pi n)^3 \phi_{n-1}(s).
\end{gather*}
If $n$ is even then,
\begin{gather*}
    \frac{\partial^3}{ \partial s^3}\phi_n(s)
    = \frac{\partial^3}{ \partial s^3}\sin((n/2) \pi s) 
    = -(\pi n)^3 \cos((n/2) \pi s) 
    = -(\pi n)^3 \phi_{n+1}(s).
\end{gather*}

Therefore we can map the entries of $\Zmat$ to $\Zmat'$.  The exact mapping is not important since all entries of $\Zmat'$ are bounded by $\frac{\pi^6\Delta}{\zeta(3)}$. Finally, we have,
\begin{gather*}
    \pi^6i^3j^3|\Zmat_{ij}| = |\Zmat'_{i\pm 1, j\pm 1}| < \frac{\pi^6\Delta}{\zeta(3)} \\
    \Rightarrow |\Zmat_{ij}| < \frac{\Delta}{\zeta(3)i^3j^3}.
\end{gather*}
\end{proof}

Combining the statements of Lemma \ref{lemma:fourier_matrix} and Lemma \ref{lemma:deriviative_conditions} along with the fact that $|\frac{d}{ds} \cos(\lfloor n/2 \rfloor \pi s)|,|\frac{d}{ds} \sin((n/2)\pi s)| \leq \pi n$ constitutes the proof of Lemma \ref{lemma:fourier_deriv_bounds}.

We can strengthen the assumption on $\Delta$ in Lemma \ref{lemma:fourier_deriv_bounds} from $\Delta < \frac{1}{2\gamma}$ to $\Delta < \frac{1}{4\gamma}$ to achieve a simple sample complexity for Algorithm \ref{algo:irl} which only depends on the probability of failure $\delta$ and the separability measure $\beta$.

\begin{corollary} \label{cor:fourier_example}
If the IRL problem is $\beta$-separable and
$$  
    \left|\frac{\partial^6}{\partial \stilde^3 \partial s^3} P_a(\stilde|s)\right| < \frac{\pi^6}{4\gamma\zeta(3)},~ \forall a\in A, \\
$$
then with $\Ocal\left( \frac{1}{\beta^3} \log \frac{1}{\beta \delta}  \right)$ samples, Algorithm \ref{algo:irl} with a trigonometric basis outputs a correct reward function $R$ with probability at least $1-\delta$.
\end{corollary}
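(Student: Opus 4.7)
The plan is to instantiate Theorem~\ref{thm:correctness} using the bounds provided by Lemma~\ref{lemma:fourier_deriv_bounds}, with the specific choice $\Delta = \frac{1}{4\gamma}$. Note that the derivative hypothesis of the corollary is exactly the derivative hypothesis of Lemma~\ref{lemma:fourier_deriv_bounds} at this value of $\Delta$, and that $\Delta = \frac{1}{4\gamma} < \frac{1}{2\gamma}$, so Lemma~\ref{lemma:fourier_deriv_bounds} applies. With this choice, the factor $\gamma\Delta = \tfrac{1}{4}$ is a universal constant, so $(\tfrac{1}{2} - \gamma\Delta)^2 = \tfrac{1}{16}$ and $\rho = \frac{4\pi\Delta\zeta(2)}{\zeta(3)} = \frac{\pi\zeta(2)}{\gamma\zeta(3)}$ are constants depending only on $\gamma$.

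Next I would choose the covering granularity $c$ and the truncation $k$ to produce the advertised sample complexity. A clean choice is $c = \frac{\beta}{2\rho}$, so that $c\rho = \frac{\beta}{2}$ and therefore $\beta - c\rho = \frac{\beta}{2} = \Theta(\beta)$; in particular condition (i) of Theorem~\ref{thm:correctness} is satisfied and $\beta > c\rho$. Plugging into condition (iii), I need $\|\Zcomp^{(a)}\|_\infty \leq \frac{\beta - c\rho}{8}\cdot\frac{1}{16} = \frac{\beta}{256}$. Setting $\epsilon = \frac{\beta}{256}$ in Lemma~\ref{lemma:fourier_deriv_bounds} yields truncation parameter $k \in \Ocal(\sqrt{\Delta/\epsilon}) = \Ocal(\beta^{-1/2})$, so $k^2 = \Ocal(\beta^{-1})$.

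Finally I would read off the sample complexity from the $n$-bound of Theorem~\ref{thm:correctness}. With the constants above, the leading factor $\frac{8^3}{(\beta-c\rho)^2(\tfrac{1}{2}-\gamma\Delta)^4}$ is $\Theta(\beta^{-2})$, and the combined bound becomes
\begin{equation*}
n = \Ocal\!\left(\beta^{-2}\cdot k^2 \cdot \log\frac{2k^2 |A|}{\delta}\right) = \Ocal\!\left(\beta^{-3} \log\frac{|A|}{\beta\delta}\right) = \Ocal\!\left(\frac{1}{\beta^{3}}\log\frac{1}{\beta\delta}\right),
\end{equation*}
absorbing $|A|$ as a constant, which is precisely the bound claimed in the corollary. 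A short appeal to Theorem~\ref{thm:correctness} then certifies correctness with probability at least $1-\delta$.

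The only real decision in the argument is how to split the slack $\beta$ between the covering error ($c\rho$) and the estimation error (the remaining $\frac{\beta - c\rho}{2}$ budget for $\|\Fhmata - \Fmata\|_\infty$); I do not expect any technical obstacle, since the nontrivial work — bounding $\|\Fmat\phi'(s)\|_\infty$, $\|\Zmat\|_\infty$, and $\|\Zcomp\|_\infty$ in terms of $\Delta$ and $k$ via Fourier-decay estimates — has already been carried out in Lemma~\ref{lemma:fourier_deriv_bounds}. The mild subtlety is verifying that the $\log k$ term inside the sample bound collapses to $\log(1/\beta)$ after substituting $k = \Ocal(\beta^{-1/2})$, which is immediate.
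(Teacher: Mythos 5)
Your proposal is correct and follows essentially the same route as the paper's own proof: instantiate Lemma~\ref{lemma:fourier_deriv_bounds} at $\Delta = \frac{1}{4\gamma}$ (so $\gamma\Delta = \frac14$ and $\rho = \frac{\pi\zeta(2)}{\gamma\zeta(3)}$), split the slack by taking $c\rho = \frac{\beta}{2}$, set $\epsilon = \frac{\beta - c\rho}{8}\left(\frac12 - \gamma\Delta\right)^2 = \Theta(\beta)$ so that $k \in \Ocal(\beta^{-1/2})$, and substitute into the sample bound of Theorem~\ref{thm:correctness} to get $\Ocal\left(\frac{1}{\beta^3}\log\frac{1}{\beta\delta}\right)$. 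Your constant $\frac{\beta}{256}$ agrees with the paper's $\epsilon = \frac{\beta - c\rho}{128}$ after substituting $\beta - c\rho = \frac{\beta}{2}$, so there is no discrepancy.
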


\begin{proof}

By Lemma \ref{lemma:fourier_deriv_bounds}, we have the following bounds when substituting in $\Delta < \frac{1}{4\gamma}$.
\begin{gather*}
    (i)~\|\Fmat \phi'(s)\|_\infty < \frac{\pi \zeta(2)}{\gamma \zeta(3)},
    ~~~~~(ii)~ \|\Zmat\|_\infty < \frac{1}{4\gamma},
    ~~~~~(iii)~\|\Zcomp \|_\infty < \epsilon,
    ~~\text{with}~k \in \Ocal\left(\sqrt{\frac{1}{\epsilon}}\right)
\end{gather*}

Theorem \ref{thm:correctness} then guarantees that Algorithm \ref{algo:irl} will return a correct reward function with probability at least $1-\delta$ for some $\gamma \in (0,1)$, $c > 0$, $k \in \Ocal\left(\sqrt{\frac{1}{\epsilon}}\right)$, and $n \in\Ocal\left(\frac{1}{(\beta - c\rho)^2} k^2 \log \frac{k}{\delta}\right)$.

To see this, let
$$
\epsilon
= \frac{\beta - c\rho}{8} \left(\frac{1}{2}-\gamma\Delta\right)^2
= \frac{\beta - c\rho}{128},
~~~~\text{and}~~~~
\rho = \frac{\pi \zeta(2)}{\gamma \zeta(3)}.
$$
We can simplify the sample complexity result by setting $c$ such that $c\rho = \frac{\beta}{2}$ and using $k \in \Ocal\left(\sqrt{\frac{1}{\epsilon}}\right)$ to obtain,
\begin{align*}
    \Ocal\left(\frac{1}{(\beta - c\rho)^2} k^2 \log \frac{k}{\delta}\right)
    &=\Ocal\left(\frac{1}{(\beta - c\rho)^2} \frac{1}{\beta - c\rho} \log \frac{1}{(\beta-c\rho)\delta}\right) \\
    &= \Ocal\left(\frac{1}{(\beta)^2} \frac{1}{\beta} \log \frac{1}{\beta\delta}\right) \\
    &= \Ocal\left(\frac{1}{\beta^3} \log \frac{1}{\beta\delta}\right).
\end{align*}
which concludes the proof.
\end{proof}

\section{Experiments}
\label{app:experiments}

In order to verify our theoretical results, we test our algorithm on simple randomly generated IRL problems.  To accomplish this, we randomly generate IRL problems with polynomial transition functions.  Using polynomial transition functions has two main advantages.  First, we can obtain simple closed form solutions when generating the coefficient matrix $\Zmata$.  Second, the transition function is infinitely differentiable, meaning the truncation error rapidly decreases with $k$. We conducted our experiments on a 64-core AMD Epyc 7662 `Rome' processor with 256 GB memory and coded our experiments using Python 3.

\subsection{Generating transition functions}

First, we describe a way to generate a random polynomial which is a valid probability distribution function over $S=[-1, 1]$.  Let $\Pcal_r = a(x - b)^{2r}$ where $a,b \sim $Uniform$(0,1)$ denote a polynomial with variable $x$. Notice that for all $r\in \mathbb{N}$, $\Pcal_r$ is non-negative over $S$.  Therefore, we can construct a non-negative (even degree) polynomial $\Pcal = \sum_{r=1}^{d/2} \Pcal_r$.  Re-normalizing $\Pcal$ so that it integrates to $1$ over $S$ then makes $\Pcal$ a valid probability density function since it integrates to one and is non-negative. 

To generate a transition function, $P(s'|s)$, create two random polynomial distributions $\Pcal_a$ and $\Pcal_b$ as described above.  Then let $P(s'|s) = (1-s^2)\Pcal_a(s') + s^2 \Pcal_b(s')$.  For each fixed $s$, $P(s'|s)$ is a weighted average of two probability density functions, and is therefore a probability density function.

\subsection{Sampling the coefficient matrix (Algorithm \ref{algo:estimate_Z}) (Figure \ref{fig:algo1_graphs})} 

In order to test Algorithm \ref{algo:estimate_Z} and the guarantees of Theorem \ref{thm:estimate}, we must compute $\Zmat$ and $\Zhmat$. Specifically, to create Figure \ref{fig:algo1_graphs}, we first generate a fixed random transition function. Next, we implement Algorithm \ref{algo:estimate_Z} to compute $\Zhmat$ for a specified truncation size $k$, where we compute the one-step transition $s_r'\sim P_a(\cdot | \sbar_r)$ using inverse transform sampling with eight bits of precision.  We cannot compute the infinite matrix $\Zmata$ completely, so instead we restrict ourselves to measuring the error $\|\Ztrunc - \Zhmat\|_\infty$.  The $i,j$-th entry of $\Zmata$ is given by the following formula.
\begin{align*}
    \Zmat_{ij} &= \int_{-1}^1\int_{-1}^1 ((1-s^2)\Pcal_a(s') + s^2 \Pcal_b(s')) \phi_i(s') \phi_j(s) ds' ds \\
    &= \int_{-1}^1 \left( (1-s^2)\int_{-1}^1 \Pcal_a(s') \phi_i(s')  ds' \right)  \phi_j(s) ds+ \int_{-1}^1 \left( s^2 \int_{-1}^1 \Pcal_b(s') ds' \phi_i(s') \right)  \phi_j(s) ds.
\end{align*}
We then use the fact that a polynomial is a sum of monomials.  This allows us to give a simple recursive specification of the integral of a polynomial with each trigonometic basis function using integration by parts.

\subsection{Solving the IRL problem (Algorithm \ref{algo:irl}) (Figure \ref{fig:algo2_graphs})}

Next, we conduct experiments testing our ability to solve a randomly generated IRL problem.  In these experiments, we use $\gamma = 0.7$ and $|A|=3$, where each transition function is as described above. We compute each $\Zhmata$ using Algorithm \ref{algo:estimate_Z} and the compute the reward vector $\alphahat$ using Algorithm \ref{algo:irl}. We classify the returned $\alphahat$ as "correct" or "incorrect" by checking if $\alphahat^T \Fhmata \phi(\sbar) > 0$ for all $\sbar \in \Sbar$ where $\Sbar$ is a covering set over $[-1,1]$ of size 100. If this inequality does not hold for any $\sbar \in \Sbar$, then the reward vector is classified as "incorrect".  To generate Figure \ref{fig:algo2_graphs}, we repeat the above process 320 times for each value of $n$ and each value of $k$.

Additionally, we verified that we implement Algorithm \ref{algo:irl} correctly by checking the empirical expected reward on a set of 100 points in $[-1,1]$ for several IRL problems.  To compute the empirical expected reward at state $s_0$, we sample a sequence of point $s_1...s_6$ such that $s_r \sim P_a(\cdot | s_{r-1})$ using inverse transform sampling, as described above.  We average the discounted reward of these 6 points over 6000 samples and find that we succeed in generating a reward vector $\alphahat$ such that the expected reward of the first action is higher than the expected reward of any other action at each starting point $s_0$.

\subsection{Effect of $\beta$-separability}

As expected from the sample complexity result of Theorem \ref{thm:correctness}, we found the $\beta$-separability had a significant impact on the sample complexity of Algorithm \ref{algo:irl} (see Figure \ref{fig:beta_test} below).

We find the separability measure $\beta$ of a randomly generated IRL problem by running Algorithm \ref{algo:irl} using the exact $\Ztrunc$ matrices with $k=11$.  Using the exact matrices removes any error introduced by sampling, and since the transition functions are infinitely differentiable, the truncation error is minimal. Since $\alphahat \Fmata \phi(s) \geq 1$ holds approximately, we can approximate $\beta$ as $\beta = \frac{1}{\|\alphahat\|_\infty}$.

\begin{figure}[H]
    \centering
    \includegraphics[width=\linewidth]{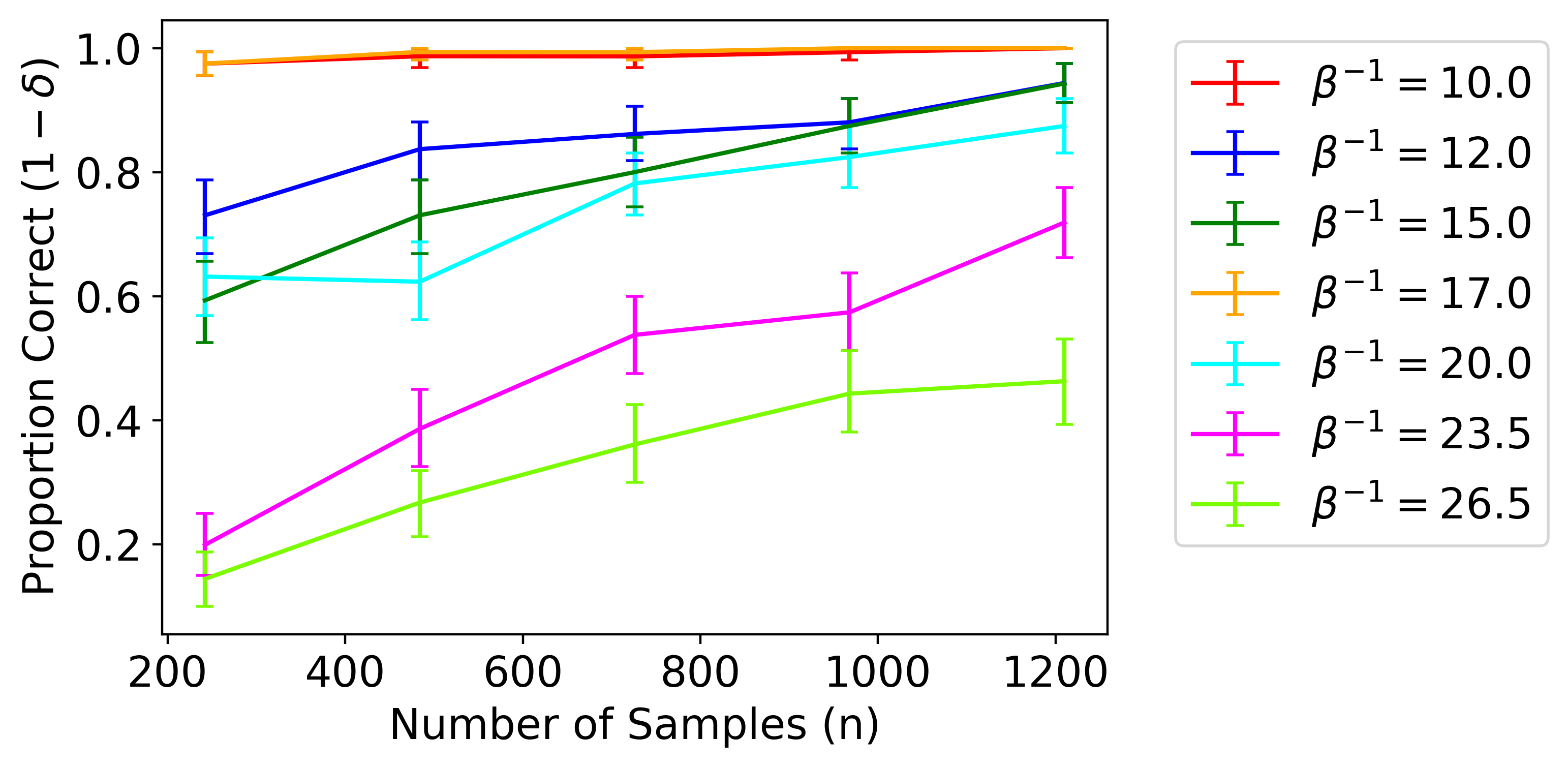}
    \caption{This graph shows the proportion of out of of 160 trials at each value of $n$ where the returned reward vector is approximately Bellman optimal versus the number of samples. We plot the results from multiple generated IRL problems across a range of values for $\beta^{-1}$.  We observe that $\beta$ has a significant impact on the samples needed in Algorithm \ref{algo:irl}. Error bars represent 95\% confidence and are computed by bootstrapping out of 120 trials.}
    \label{fig:beta_test}
\end{figure}

\end{document}